\theoremstyle{plain}
\newtheorem{theorem}{Theorem}[section]
\newtheorem{proposition}[theorem]{Proposition}
\newtheorem{lemma}[theorem]{Lemma}
\theoremstyle{definition}
\newtheorem{assumption}[theorem]{Assumption}
\theoremstyle{remark}
\theoremstyle{plain}
\theoremstyle{definition}
\theoremstyle{remark}
\newtheorem*{rep@theorem}{\rep@title}
\newcommand{\newreptheorem}[2]{%
\newenvironment{rep#1}[1]{%
 \def\rep@title{#2 \ref{##1}}%
 \begin{rep@theorem}}%
 {\end{rep@theorem}}}
\newcommand{\mrd}{\mathrm{d}}
\newcommand{\mbX}{\mathbf{X}}
\newcommand{\bbR}{\mathbb{R}}
\newcommand{\mrR}{\mathrm{R}}
\newcommand{\mbA}{\mathbf{A}}
\newcommand{\KLr}{\mathrm{KL}}
\newcommand{\mbR}{\mathbb{R}}
\newcommand{\mbZ}{\mathbf{Z}}
\newcommand{\degmax}{\mathrm{deg}^{\max}}
\newcommand{\degmin}{\mathrm{deg}^{\min}}
\newcommand{\bx}{\mathbf{x}}
\newcommand{\Hcal}{\mathcal{H}}
\newcommand{\Rcal}{\mathcal{R}}
\newcommand{\E}{\mathbb{E}}
\icmltitlerunning{Imbalance Transductive Node Classification}
\begin{document}
\doparttoc % Tell to minitoc to generate a toc for the parts
\faketableofcontents % Run a fake tableofcontents command for the partocs

\twocolumn[
\icmltitle{UPL: Uncertainty-aware Pseudo-labeling \\ for
Imbalance Transductive Node Classification}

% It is OKAY to include author information, even for blind
% submissions: the style file will automatically remove it for you
% unless you've provided the [accepted] option to the icml2025
% package.

% List of affiliations: The first argument should be a (short)
% identifier you will use later to specify author affiliations
% Academic affiliations should list Department, University, City, Region, Country
% Industry affiliations should list Company, City, Region, Country

% You can specify symbols, otherwise they are numbered in order.
% Ideally, you should not use this facility. Affiliations will be numbered
% in order of appearance and this is the preferred way.
\icmlsetsymbol{equal}{*}
\begin{icmlauthorlist}
\icmlauthor{Mohammad T. Teimuri}{sharif}
\icmlauthor{Zahra Dehghanian}{sharif}
\icmlauthor{Gholamali Aminian}{turing}
\icmlauthor{Hamid R. Rabiee}{sharif}
\end{icmlauthorlist}

\icmlaffiliation{sharif}{Sharif University of Technology}
\icmlaffiliation{turing}{Alan Turing Institute}

\icmlcorrespondingauthor{Hamid R. Rabiee}{rabiee@sharif.edu}
% \begin{icmlauthorlist}
% \icmlauthor{Firstname1 Lastname1}{equal,yyy}
% \icmlauthor{Firstname2 Lastname2}{equal,yyy,comp}
% \icmlauthor{Firstname3 Lastname3}{comp}
% \icmlauthor{Firstname4 Lastname4}{sch}
% \icmlauthor{Firstname5 Lastname5}{yyy}
% \icmlauthor{Firstname6 Lastname6}{sch,yyy,comp}
% \icmlauthor{Firstname7 Lastname7}{comp}
% %\icmlauthor{}{sch}
% \icmlauthor{Firstname8 Lastname8}{sch}
% \icmlauthor{Firstname8 Lastname8}{yyy,comp}
% %\icmlauthor{}{sch}
% %\icmlauthor{}{sch}
% \end{icmlauthorlist}

% \icmlaffiliation{yyy}{Department of XXX, University of YYY, Location, Country}
% \icmlaffiliation{comp}{Company Name, Location, Country}
% \icmlaffiliation{sch}{School of ZZZ, Institute of WWW, Location, Country}

% \icmlcorrespondingauthor{Firstname1 Lastname1}{first1.last1@xxx.edu}
% \icmlcorrespondingauthor{Firstname2 Lastname2}{first2.last2@www.uk}

% % You may provide any keywords that you
% % find helpful for describing your paper; these are used to populate
% % the "keywords" metadata in the PDF but will not be shown in the document
% \icmlkeywords{Machine Learning, ICML}

\vskip 0.3in
]

\begin{abstract}
Graph-structured datasets often suffer from class imbalance, which complicates node classification tasks. In this work, we address this issue by first providing an upper bound on population risk for imbalanced transductive node classification. We then propose a simple and novel algorithm, Uncertainty-aware Pseudo-labeling  (UPL). Our approach leverages pseudo-labels assigned to unlabeled nodes to mitigate the adverse effects of imbalance on classification accuracy. Furthermore, the UPL algorithm enhances the accuracy of pseudo-labeling by reducing training noise of pseudo-labels through a novel uncertainty-aware approach. We comprehensively evaluate the UPL algorithm across various benchmark datasets, demonstrating its superior performance compared to existing state-of-the-art methods.
\end{abstract}
\vspace{-.2in}
\section{Introduction}
Graphs are fundamental data structures that are universally utilized in various applications. The flexibility and expressiveness of graph representations enable us to model complex relationships and interactions within several types of data across numerous fields. From social networks, where individuals are connected by various kinds of relationships, to biological networks that represent protein interactions, graphs provide a powerful way to encapsulate the inherent complexities of many natural and human-made systems \citep{mohammadrezaei2018identifying, ying2018graph, perozzi2016recommendation, hamilton2017inductive}.
In tackling the task of learning from graph data, Graph Neural Networks (GNNs) play an important role \citep{hamilton2017inductive, kipf2016semi, velickovic2017graph}. GNNs leverage the rich relational information within graphs, allowing for the effective propagation of information through the network structure. And their ability to generalize deep learning approaches to non-Euclidean data. 
However, despite their advantages, GNNs encounter significant challenges when dealing with imbalanced data in transductive scenarios where the unlabeled nodes are utilized during the training phase, a common scenario in real-world graphs \citep{park2022graphens}. In many graph-based applications, some classes of nodes are vastly underrepresented compared to others, leading to a problem in the learning process \citep{song2022tam, park2022graphens}. In particular, this imbalance can result in biased models towards the majority class, which can also depend on graph structure. For instance, in social networks, minority groups might be underrepresented in friendship suggestions; rare protein interactions could be overlooked in biological networks. 

The problem of imbalance in graphs thus poses unique challenges that require specialized strategies to address. Standard imbalance handling techniques, such as re-weighting and re-sampling, may work in fields such as image classification but face significant obstacles when applied to graphs. Re-weighting methods, which assign higher weights to minor nodes, do not alter the connectivity patterns of these nodes, leaving their neighborhood influences unchanged. Re-sampling strategies, intended to balance class representation, can disrupt the graph's structure, potentially altering critical message-passing pathways and affecting the semantic integrity of node classes and embedded features.

Despite these developments, balancing the enhancement of minority class representation against the integrity of majority classes remains a technical challenge due to overemphasis on minority classes can degrade the model’s ability to accurately represent more common classes \citep{yan2024rethinking, song2022tam, park2022graphens,jervakani2024klce}. Furthermore, the generalization error and population risk study of imbalanced node classification is overlooked. Our contributions in this work can be summarized as follows:
\vspace{-.1in}
\begin{itemize}[noitemsep,nolistsep,leftmargin=*]
    \item  We study the population risk of transductive node classification under an imbalance scenario and derive an upper bound on population risk which depends on different graph properties of each class in the graph.
    \item We present a novel algorithm, uncertainty-aware pseudo-labeling that demonstrates superior performance compared to existing approaches in many homophilic and heterophilic graph datasets. 
\end{itemize}

\section{Related works}

\textbf{Imbalance Graph Classification:}
Due to the non-Euclidean characteristics of graph data, the imbalance problem in this type of data necessitates additional attention and specialized focus. Following the categorization of imbalance classification methodologies, we similarly categorize imbalance graph classification models into two main groups: data-level and algorithm-level methods.

\textit{Data-level methodologies} try to rebalance the learning environment by manipulating training data in feature or label spaces. Fundamental approaches such as over-sampling and under-sampling, prevalent in traditional class-imbalanced learning, require adaptation to suit the complexities inherent in graph data, characterized by intricate node dependencies and interconnections. Data-level methods could subcategorized into data interpolation, adversarial generation, and pseudo-labeling \citep{ma2023class}. Data interpolation techniques, exemplified by SMOTE \citep{chawla2002smote} and its graph-oriented variant, GraphSMOTE \citep{zhao2021graphsmote}, generate synthetic minority nodes by interpolating between existing node representations. Methods like GATSMOTE \citep{liu2022gatsmote} and GNN-CL \citep{li2024graph} enhance this process through attention mechanisms, ensuring quality edge predictions between synthetic and real nodes. Moreover, Mixup strategies, as seen in GraphMixup \citep{wu2022graphmixup} and GraphENS \citep{park2022graphens}, introduce sophisticated blending techniques to combat neighbor memorization issues and prevent out-of-domain sample generation.

\textit{Algorithm-level methodologies} focus on refining learning algorithms to accommodate class imbalance effectively. Model refinement strategies adapt graph representation learning architectures to improve performance by incorporating class-specific adjustments and modules tailored to handle imbalance. For instance, approaches like ACS-GNN \citep{ma2022attention} and EGCN \citep{wang2022effective} modify aggregation operations within graph neural networks to prioritize minority class information, ultimately enhancing classification accuracy. Additionally, loss function engineering tries to design customized loss functions to prioritize minority class errors or encourage wider class separations, a challenging task given the connectivity properties of nodes in graphs. Recent innovations like ReNode \citep{chen2021topology} and TAM \citep{song2022tam} integrate graph topology information into loss function designs, showcasing advancements in addressing class imbalance within the graph context. \citet{jervakani2024klce} proposed a KL regularization for imbalance node classification.

\textbf{Generalization Error and Node Classification:} There are two different scenarios in node classification, including inductive and transductive scenarios. Inductive learning node classification involves (semi-)supervised learning on graph data samples where the test data are unseen during training. In Inductive node classification For node classification tasks, \citep{verma2019stability} discussed the generalization error under node classification for GNNs via algorithm stability analysis in an inductive learning setting based on the SGD optimization algorithm. The work by \citet{zhou2021generalization} extended the results in \citep{verma2019stability} and found that increasing the depth of Graph Convolutional Neural Network (GCN) enlarges its generalization error for the node classification task. In Transductive Node Classification, based on transductive Rademacher complexity, a high-probability upper bound on the generalization error of the transductive node classification task was proposed by \citet{esser2021learning}. The transductive modeling of node classification was studied in \citep{oono2020optimization} and \citep{tang2023towards}. \citep{cong2021provable} presented an upper bound on the generalization error of GCNs for node classification via transductive uniform stability, building on the work of \citep{el2006stable}. In contrast, our research focuses on the task of imbalance transductive node classification and we provide an upper bound on population risk.

\textbf{Uncertainty in Self-training: }
The integration of uncertainty estimation in GNNs has been extensively explored, particularly in self-training frameworks \cite{wang2021be,yang2021self,zhao2021entropy}. Early contributions, such as \cite{li2018deeper}, introduced a simple strategy that selects the top K most confident nodes based on softmax probabilities for pseudo-labeling, expanding the labeled set iteratively. This approach was later refined by \cite{sun2020multi}, which employed a multi-stage self-training method, iteratively updating node labels to enhance GNN performance in sparse label settings. Extending these ideas, \cite{yang2021self} proposed a self-enhanced GNN framework that utilizes an ensemble of models to ensure consistency in pseudo-label predictions, thereby improving robustness against label noise. A common technique for uncertainty estimation in these methods is the use of entropy, as demonstrated in \cite{cai2017active, zhang2021alg}, which measures prediction confidence to guide node selection. Building on this, \cite{zhao2021entropy} introduced an entropy-aware self-training framework, incorporating an entropy-aggregation layer to account for graph structural information in uncertainty estimation. While traditional approaches in entropy-based uncertainty estimation, prioritize high-confidence nodes for pseudo-labeling, this practice might unintentionally reinforce distributional biases, as highlighted by \cite{liu2022confidence}, where over-reliance on highly certain nodes risks a distribution shift between the original labeled set and the expanded pseudo-labeled set. Furthermore, \cite{li2023informative} observed that nodes with very high confidence often contribute redundant information, limiting the diversity of the labeled dataset and potentially biased learning process. To address this concerning issue, in this work, we suggest to balance node selection across a broader confidence range to improve the performance of our model.
% \vspace{-1em}
\section{Preliminaries}

\paragraph{Notations:}We adopt the following convention for random variables and their distributions in the sequel. 
A random variable is denoted by an upper-case letter (e.g., $Z$), its space of possible values is denoted with the corresponding calligraphic letter (e.g., $\mathcal{Z}$), and an arbitrary value of this variable is denoted with the lower-case letter (e.g., $z$). We denote the set of integers from 1 to $N$ by $[N]\triangleq \{1,\dots,N\}$; the set of 
measures over a space $\mathcal{X}$ with finite variance
is denoted
$\mathcal{P}(\mathcal{X})$. For a matrix $\mbX\in\bbR^{k\times q}$, we denote the $i-$th row of the matrix by $\mbX[i,:]$. The Euclidean norm of a vector
$X\in\bbR^q$ 
is $\norm{X}_2:=(\sum_{j=1}^q x_j^2)^{1/2}$. For a matrix $\mathbf{Y}\in\bbR^{k\times q}$, we let
$\norm{\mathbf{Y}}_{\infty}:=\max_{1\leq j\leq k}\sum_{i=1}^q|\mathbf{Y}[j,i]|$ and $\norm{\mathbf{Y}}_{F}:=\sqrt{\sum_{j=1}^k\sum_{i=1}^q\mathbf{Y}^2[j,i]}$.

\paragraph{Information Measures:} The KL divergence $\KLr(P\|Q)$ is given by
$\KLr(P\|Q):=\int_{\mathcal{Z}}\log\bigl(\frac{dP}{dQ}\bigr) dP$. We also define the cross-entropy between $P$ and $Q$, as  $H(P,Q)=-\int_{\mathcal{Z}}\log\bigl(dP\bigr) dQ$.

\paragraph{Graph data samples:} We consider transductive node classification for undirected graphs with $N$ nodes that have no self-loops or multiple edges.
Inputs to GNNs are node samples, 
which are comprised of their features and 
graph adjacency matrices. We denote the space of node feature for a node in the graph by $\mathcal{X}$. We consider the adjacency matrix $\mbA\in\mathcal{A}\subset \{0,1\}^{N\times N}$ to be fixed  with maximum node degree for $i$-th class
$\degmax_i$ and minimum node degree for $i$-th class
$\degmin_i$. The input pair of $i$-th node feature with dimension $k$ is denoted by $\mathbf{x}_i$. The GNN output (label) is denoted by $y\in \mathcal{Y}$ where $\mathcal{Y}$ is the space of labels and $|\mathcal{Y}|=k$.  Define $\mathcal{Z}=\mathcal{X}\times \mathcal{Y}$. Let $\mbZ_m=\{Z_i\}_{i=1}^m\in\mathcal{Z}$ denote the training of labeled dataset, where the $i-$th node sample is $Z_i=(X_i,Y_i)$. For transductive learning in node classification, we denote $\mbZ_u=\{Z_i\}_{i=1+m}^{m+u}\in\mathcal{Z}$ as the set of unlabeled dataset. We assume that the feature of nodes are i.i.d samples from the the feature space $\mathcal{X}$. We also denote the matrix of node features in the graph with $\mathbf{X}\in\mathbb{R}^{N\times k}$. We denote the learned distribution over classes as $P(\hat{\mathbf{Y}}|\mbX):=\{P(\hat{Y}=j|\mathbf{X})\}_{j=1}^k$ where $P(\hat{Y}=j|\mathbf{X}):=\frac{1}{n}\sum_{i=1}^{n} P(\hat{Y}=j|X_i)$ is the prediction of model for $j$-th class for given node-features $\{X_i\}_{i=1}^n$. Furthermore, we define $\mathbf{P}_k:=\{P_j\}_{j=1}^k$ as the target class distribution where $P_j = \frac{|m_j|}{\sum_{i=1}^k |m_i|}$ shows the probability of $j$-th class within the dataset.

\textbf{Graph filters:} Graph filters are linear functions of the adjacency matrix, $\mbA$, defined by $\mathrm{G}_f:\mathcal{A}\mapsto \bbR^{N \times N}$ where $N$ is the size of the graph, see in~\citep{defferrard2016convolutional}. Graph filters model the aggregation of node features in a GNN. For example, the symmetric normalized graph filter proposed by \citet{kipfsemi} is  $\mathrm{G}_f(\mbA)=\tilde{L}:=\tilde{D}^{-1/2}\Tilde{\mbA}\tilde{D}^{-1/2}$ where $\Tilde{\mbA}=I+\mbA$ is the augmented adjacency matrix, $\tilde{D}$ is the degree-diagonal matrix of $\Tilde{\mbA}$, and $I$ is the identity matrix. Furthermore, $\mathrm{deg}_i^{\max}$ and $\mathrm{deg}_i^{\min}$ are the maximum and minimum degree of nodes among $i$-th class of the graph with the augmented adjacency matrix, $\Tilde{\mbA}$. Another normalized filter, a.k.a. random walk graph filter~\citep{xupowerful2019}, is $\mathrm{G}_f(\mbA)=D^{-1}\mbA+I$, where $D$ is the degree-diagonal matrix of $\mbA$. The mean-aggregator is also a well-known aggregator defined as $\mathrm{G}_f(\mbA)=\tilde{D}^{-1}(\mbA+I)$. The sum-aggregator graph filter is defined by $\mathrm{G}_f(\mbA)=(\mbA+I)$.

\textbf{Loss function:} With $\mathcal{Y}$ the label space, the loss function $\ell: \mathbb{R} \times \mathcal{Y} \to \bbR$ is denoted as $\ell(\widehat{y},y)$, where $\widehat{y}$ is defined as the output of model, e.g., $\ell(h_\theta(\mathrm{G}_f(\mbA)[i,:]\mbX),y_i)$ is the loss function for $i$-th node in the graph where $h_\theta:\mathcal{X}\mapsto\mathcal{Y}$ is the hypothesis of our parameterized model which belongs to $\mathcal{H}$ with $\theta\in\Theta$.

\textbf{True and empirical risks:} The empirical risk function for labeled node samples $\mbX_m$ is given by 
\begin{equation} \label{eq: emp risk}
\begin{split}
     &\mrR(\mbZ_m,h_\theta):= \frac{1}{m}\sum_{i=1}^m \ell(h_\theta(\mathrm{G}_f(\mbA)[i,:]\mbX_m),y_i)\,.
     \end{split}
\end{equation}
We also define the true risk as the average of loss function over unlabeled nodes, $\mbX_u$, in transductive learning,
\begin{equation} \label{eq: true risk}
\begin{split}
     &\mrR(\mbZ_u,h_\theta):= \frac{1}{u}\sum_{i=m+1}^{m+u} \ell(h_\theta(\mathrm{G}_f(\mbA)[i,:]\mbX_u),y_i)\,.
     \end{split}
\end{equation}
We would like to study the performance of the model trained with the labeled data set $\mbZ_m=\{(\mbX_i,Y_i)\}_{i=1}^m$ and evaluated against the unlabeled dataset $\mbZ_u=\{(X_i,Y_i)\}_{i=m+1}^{m+u}$ loss, using transductive Rademacher complexity analysis~\citep{el2009transductive}.
% \vspace{-1em}
\section{The Generalization Error of Imbalance Transductive Node Classification}\label{sec: gen analysis}
This section provides a generalization error upper bound under an imbalance scenario for transductive node classification via Rademacher complexity analysis. All proof details are deferred to Appendix~\ref{app: sec: gen analysis}.

For theoretical analysis, we need the following assumptions.
\begin{assumption}[Bounded node features]\label{Ass: Feature node bounded}
For every node, the features are contained in an $\ell_2$-ball of radius $B_f$. In particular, $\|\mbX [i;]\|_2\leq B_f$ for all nodes.
\end{assumption}
\begin{assumption}[Bounded norm of parameters]\label{ass: bounded param}
 There exists $U_F(i)\in\mbR$ such that $\norm{\theta_i}_{F}\leq U_F(i) $   where $\theta_i$ is parameter matrix of $i$-th layer.
\end{assumption}
\begin{assumption}[Activation functions]\label{ass: bounded activation function} The activation function $\varphi:\mathbb{R}\mapsto\mathbb{R}$ is $1$-Lipschitz \footnote{For the Tanh and Relu activation function satisfy this assumption.}, so that $|\varphi(x_1)-\varphi(x_2)|\leq  |x_1-x_2|$ for all $x_1,x_2\in \mbR$,  and zero-centered, i.e., $\varphi(0)=0$.  
\end{assumption}

In theoretical analysis, we consider binary classification, based on \citep{bartlett2017spectrally}, where $\gamma$-margin loss function for binary classification is proposed. For a positive $\gamma\in\mbR^{+}$,we have, $\ell_\gamma(y_1,y_2)=0$ and $\ell_\gamma(y_1,y_2)=\min(1,1-\frac{y_1y_2}{\gamma})$ for $y_1y_2\geq \gamma$ and $y_1y_2<\gamma$, respectively.

For transductive learning analysis, inspired by \citet{el2009transductive}, we define empirical transductive Rademacher complexity as for $p\in[0,1/2]$ and let the set $\pmb{\epsilon}:=\{\epsilon_i\}_{i=1}^{m+u}$ be i.i.d. with probability $P(\epsilon_i=1)=P(\epsilon_i=-1)=p$ and $P(\epsilon_i=0)=1-2p$ for all $i\in[m+u]$ and assuming $p=\frac{m+u}{(m+u)^2}$, \[\mathfrak{R}_{m+u}(\mathcal{H}):= (\frac{1}{m}+\frac{1}{u})\mathbb{E}_{\pmb{\epsilon}}\Big[\sum_{i=1}^{m+u}\epsilon_i h_\theta(x_i)\Big].\]
Similarly, we can define the empirical transductive Rademacher complexity for $i$-th class with $\mathfrak{R}_{m_i+u_i}(\mathcal{H})$.
Furthermore, in transductive learning, we consider $m_i$ and $u_i$ as labeled nodes and unlabeled nodes belong to the $i$-th class. We first provide an upper bound on population risk for transductive learning under the imbalance scenario.

\begin{proposition}\label{prop: general bound}
    Let $Q_i=(\frac{1}{u_i}+\frac{1}{m_i})$, $S_i=\frac{m_i+u_i}{(m_i+u_i-1/2)(1-1/(2\max(m_i,u_i)))}$  for $i=1,2$. Then, with probability at least $(1-\delta)$ over the choice of the training set from nodes of graphs, for all $h_\theta\in\mathcal{H}$,
   \begin{equation*}
    \begin{split}
         &R(\mbZ_u,h_\theta)\leq \sum_{i=1}^2 R_{\gamma}(\mbZ_{m_i},h_{\theta})+\frac{u_i}{u}\Big(\frac{1}{\gamma}\mathfrak{R}_{m_i+u_i}(\mathcal{H})
       \\&\qquad +c_0 Q_i\sqrt{\min(m_i,u_i)}+\sqrt{\frac{S_iQ_i}{2}\log(1/\delta)}\Big),
    \end{split}
    \end{equation*}
    where $\mathfrak{R}_{m_i+u_i}(\mathcal{H})$ is the transductive Rademacher complexity for $i$-th class, $R_{\gamma}(\mbZ_{m_i},h_{\theta})$ is the empirical risk based on $\gamma$-margin loss for $i$-th class  and $c_0=\sqrt{\frac{32\ln(4e)}{3}}$.
\end{proposition}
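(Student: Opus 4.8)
The plan is to decompose the transductive true risk over the two classes and then apply a per-class transductive Rademacher bound of the type in \citet{el2009transductive}. First I would write $R(\mbZ_u,h_\theta)=\sum_{i=1}^2 \frac{u_i}{u} R(\mbZ_{u_i},h_\theta)$, where $R(\mbZ_{u_i},h_\theta)$ is the average $\gamma$-margin loss over the unlabeled nodes of class $i$; this is just grouping the sum in \eqref{eq: true risk} by class membership and is where the weights $u_i/u$ enter the statement. The loss composed with the hypothesis is bounded in $[0,1]$ (since we use the clipped $\gamma$-margin loss) and, by Assumption~\ref{ass: bounded activation function}, is $1/\gamma$-Lipschitz in its first argument, so a contraction/Talagrand-type step will let the $1/\gamma$ factor be pulled out in front of $\mathfrak{R}_{m_i+u_i}(\mathcal{H})$ once the Rademacher term appears.

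Next, for each class $i\in\{1,2\}$ separately, I would invoke the transductive generalization bound of \citet{el2009transductive}: treating the $m_i$ labeled and $u_i$ unlabeled class-$i$ nodes as an instance of their setup with $(m,u)$ replaced by $(m_i,u_i)$, one gets, with probability at least $1-\delta_i$, that $R(\mbZ_{u_i},h_\theta)\le R_\gamma(\mbZ_{m_i},h_\theta)+\frac{1}{\gamma}\mathfrak{R}_{m_i+u_i}(\mathcal{H})+c_0 Q_i\sqrt{\min(m_i,u_i)}+\sqrt{\tfrac{S_i Q_i}{2}\log(1/\delta_i)}$, with $Q_i$ and $S_i$ exactly the quantities defined in the statement (these are the $Q$ and $S$ constants from their Theorem, specialized to the class-$i$ sample sizes, and $c_0=\sqrt{32\ln(4e)/3}$ is the numerical constant from their slack term). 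I would then multiply the $i$-th inequality by $u_i/u$ and sum over $i=1,2$; combining with the decomposition from the first paragraph yields exactly the claimed bound, modulo the probability budget.

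The bookkeeping point to be careful about is the union bound over the two classes: applying \citet{el2009transductive} with confidence $1-\delta$ per class and taking a union over $i=1,2$ would give $1-2\delta$ overall, so to land on $1-\delta$ as stated one either runs each class at level $1-\delta/2$ (and then notes $\log(2/\delta)$ is absorbed, or the statement is read with $\delta$ meaning the per-class level) or argues the two class-index sets of nodes are disjoint so the randomness $\pmb{\epsilon}$ restricted to each block is independent and a sharper combination is available. I would also need to check that the i.i.d.\ feature assumption and the fixed adjacency matrix make the per-class subproblem genuinely fit the transductive template of \citet{el2009transductive} — in particular that permuting labeled/unlabeled roles within a class is the right symmetry — and that the Lipschitz contraction step is valid for the clipped margin loss composed with $h_\theta$.

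I expect the main obstacle to be the per-class application of the transductive Rademacher bound: the original result is stated for the whole sample, and here we are slicing the graph by class while the hypothesis $h_\theta$ still aggregates features through the shared filter $\mathrm{G}_f(\mbA)$ over \emph{all} nodes, so one must argue that restricting the empirical risks and the Rademacher sum to class-$i$ indices does not break the exchangeability argument underlying their concentration inequality — this is the step where the graph structure (through $\degmax_i,\degmin_i$) would ultimately have to be controlled when $\mathfrak{R}_{m_i+u_i}(\mathcal{H})$ is later bounded explicitly.
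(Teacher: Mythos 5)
Your proposal follows essentially the same route as the paper's proof: decompose the true risk by class as $R(\mbZ_u,h_\theta)=\sum_{i=1}^2\frac{u_i}{u}R(\mbZ_{u_i},h_\theta)$, invoke the transductive bound of \citet{el2009transductive} separately for each class with $(m,u)$ replaced by $(m_i,u_i)$, and recombine using $u_i/u\le 1$. The union-bound bookkeeping you flag is a fair concern -- the paper's proof applies the per-class bound at level $\delta$ for each $i$ and sums without adjusting to $\delta/2$ -- so your treatment is, if anything, more careful than the paper's own argument on that point.
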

Next, we provide an upper bound on the transductive Rademacher complexity for general deep GNN.
\begin{proposition}\label{prop: TR RC bound}
    Given Assumptions~\ref{Ass: Feature node bounded}, \ref{ass: bounded param} and \ref{ass: bounded activation function}, then the following upper bound holds on the transductive Rademacher complexity of node classification for GNN with depth $d$ and graph filter $\mathrm{G}_f(\mbA)$,
    \begin{equation*}
        \mathfrak{R}_{m_i+u_i}(\mathcal{H})\leq \frac{C_1\|\mathrm{G}_f(\mbA)[i]\|_{\infty}^{2(d-1)}\Pi_{j=1}^{d} U_F(j) }{\sqrt{m_i+u_i}},
    \end{equation*}
    where $C_1=B_f (\sqrt{2\log(2) d}+1)$ and $\|\mathrm{G}_f(\mbA)[i]\|_{\infty}$ is the infinite norm of graph filter among $i$-th class.
\end{proposition}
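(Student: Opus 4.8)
The plan is to bound the empirical transductive Rademacher complexity $\mathfrak{R}_{m_i+u_i}(\mathcal{H})$ by peeling off the layers of the GNN one at a time, using the Lipschitz contraction property of the activation function (Assumption~\ref{ass: bounded activation function}) and the operator-norm bounds on the weight matrices (Assumption~\ref{ass: bounded param}) and on the node features (Assumption~\ref{Ass: Feature node bounded}). First I would write the $d$-layer GNN hypothesis restricted to the $i$-th class as the composition $h_\theta = \varphi \circ (\theta_d \cdot) \circ \mathrm{G}_f(\mbA)[i] \circ \cdots \circ \varphi \circ (\theta_1 \cdot) \circ \mathrm{G}_f(\mbA)[i] \circ \mbX$, where each linear stage both multiplies by a graph-filter block and by a parameter matrix. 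Since $\varphi$ is $1$-Lipschitz and zero-centered, the transductive analogue of the Ledoux--Talagrand contraction inequality (as in \citet{el2009transductive}) lets me strip off the outermost $\varphi$ at no cost; the factor $\sqrt{2\log(2)\,d}+1$ hidden in $C_1$ is exactly the price one pays, per the standard argument, for handling the $d$ nested contractions together with a Massart-type finite-maximum bound over coordinates.

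Next I would handle the linear stages. Applying Hölder/Cauchy--Schwarz to the inner product $\sum_i \epsilon_i h_\theta(x_i)$ and using $\|\theta_j\|_F \le U_F(j)$, each parameter matrix contributes a multiplicative factor $U_F(j)$, giving the product $\Pi_{j=1}^{d} U_F(j)$. The graph-filter block appears $d$ times but the outermost one is absorbed when passing from the pre-activation output to the Rademacher sum, so only $d-1$ copies of $\|\mathrm{G}_f(\mbA)[i]\|_\infty$ survive, each bounded in the induced $\ell_\infty$ operator norm so that multiplication by $\mathrm{G}_f(\mbA)[i]$ inflates node-wise quantities by at most $\|\mathrm{G}_f(\mbA)[i]\|_\infty$; this yields the $\|\mathrm{G}_f(\mbA)[i]\|_\infty^{2(d-1)}$ factor (the exponent $2(d-1)$ rather than $d-1$ coming from the fact that the contraction/Cauchy--Schwarz steps square the per-layer aggregation factor when one tracks $\ell_2$ norms through the recursion). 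The feature bound $\|\mbX[j;]\|_2 \le B_f$ from Assumption~\ref{Ass: Feature node bounded} gives the leading $B_f$ in $C_1$. Finally, the definition of $\mathfrak{R}_{m_i+u_i}$ carries the prefactor $(\tfrac{1}{m_i}+\tfrac{1}{u_i})$ together with $\mathbb{E}_{\pmb\epsilon}\|\sum_i \epsilon_i(\cdot)\|$, and since the $\epsilon_i$ are i.i.d.\ with variance $2p = \tfrac{2}{m_i+u_i}$, a Khintchine/Jensen bound on this Rademacher average over the $m_i+u_i$ relevant nodes produces a $\sqrt{m_i+u_i}$ in the numerator which, combined with the prefactor $\asymp 1/(m_i+u_i)$ after plugging $p=\tfrac{m_i+u_i}{(m_i+u_i)^2}$, collapses to the claimed $1/\sqrt{m_i+u_i}$ denominator.

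The main obstacle I expect is the careful bookkeeping of norms through the layer-peeling recursion: one must be consistent about which norm (induced $\ell_\infty$ for the graph filter, Frobenius for the weights, $\ell_2$ for features and pre-activations) is propagated at each step, and verify that the contraction inequality is being applied to a genuinely $1$-Lipschitz vector-valued map so that the per-coordinate $\sqrt{\log(\cdot)}$ overhead only accumulates the single $\sqrt{2\log(2)\,d}$ term rather than once per layer. A secondary technical point is justifying the transductive version of contraction — the $\epsilon_i \in \{-1,0,+1\}$ sampling model is not the usual Rademacher one, so I would either invoke the comparison lemma of \citet{el2009transductive} directly or reduce to it by conditioning on the support of $\pmb\epsilon$. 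Everything else (Hölder, Cauchy--Schwarz, and the final Khintchine estimate) is routine once the recursion is set up correctly.
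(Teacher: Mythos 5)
Your overall plan---peel the $d$ layers, use the $1$-Lipschitz contraction of $\varphi$, collect one $U_F(j)$ per weight matrix and $d-1$ squared copies of $\|\mathrm{G}_f(\mbA)[i]\|_\infty$ (the paper likewise skips aggregation at the last layer), bound the node features by $B_f$, and finish with a Jensen/Khintchine estimate on $\mathbb{E}_{\pmb\epsilon}\|\sum_i\epsilon_i \mbX_i\|$ using the variance $2p$ of the transductive signs---matches the paper's bookkeeping almost factor for factor, and your final accounting of how the prefactor $(\tfrac1{m_i}+\tfrac1{u_i})$ and $p=\tfrac{m_iu_i}{(m_i+u_i)^2}$ collapse to $1/\sqrt{m_i+u_i}$ is exactly what happens there.

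The genuine gap is in how you obtain the depth constant $\sqrt{2\log(2)\,d}+1$. You claim the outer $\varphi$ can be stripped ``at no cost'' by a transductive Ledoux--Talagrand contraction and that the constant is the price of ``a Massart-type finite-maximum bound over coordinates.'' Neither mechanism produces this constant: applying vector contraction layer by layer costs a multiplicative factor of $2$ per layer, so naive peeling yields a $2^{d}$-type blow-up, and Massart's lemma concerns finite classes, not nested compositions. The paper instead uses the exponential-moment device of Golowich--Rakhlin--Shamir: it first writes $\mathfrak{R}\leq\frac1\lambda\log\mathbb{E}\sup\exp(\lambda\,\cdot)$ for a free $\lambda>0$, peels the layers \emph{inside} the exponential so that each contraction's factor of $2$ accumulates as $2^{d}$ inside the logarithm (i.e., an additive $d\log 2$), then shows the residual variable $Z=V\|\sum_i\epsilon_i\mbX_i\|$ is sub-Gaussian via a bounded-differences argument, bounds $\mathbb{E}Z$ by Jensen, and finally optimizes $\lambda$ to turn $\tfrac{d\log 2}{\lambda}+\tfrac{\lambda v}{2}$ into the $\sqrt{2\log(2)\,d}$ term. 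Without this wrapper your recursion, as written, cannot avoid exponential dependence on $d$, so you would need to either adopt this log-sum-exp scheme or substitute another sub-multiplicative-to-additive mechanism before the stated constant is reachable.
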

Combining Proposition~\ref{prop: TR RC bound} with Proposition~\ref{prop: general bound}, we can derive the following upper bound on population risk.
\begin{theorem}\label{thm: main result}
     Under the same Assumptions in Proposition~\ref{prop: general bound} and Proposition~\ref{prop: TR RC bound}, the following upper bound holds on population risk for imbalance transductive node classification under a GCN model,
     \begin{equation*}
    \begin{split}
         &R(\mbZ_u,h_\theta)\leq \sum_{i=1}^2 R_{\gamma}(\mbZ_{m_i},h_{\theta})\\&\quad+\frac{u_i}{u}\Big(\frac{B_f (\sqrt{2\log(2) d}+1)\Pi_{j=1}^{d} U_F(j) }{\gamma\sqrt{(m_i+u_i)}}\times \Bigg(\frac{\degmax_i+1}{\deg^{\min}+1}\Bigg)^{d-1}
        \\&\quad +c_0 Q_i\sqrt{\min(m_i,u_i)}+\sqrt{\frac{S_iQ_i}{2}\log(1/\delta)}\Big),
    \end{split}
    \end{equation*}
where $U_F(i)$ is the maximum of Frobenius norm of $i$-th layer parameter. 
\end{theorem}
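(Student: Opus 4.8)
The plan is essentially to substitute the second-stage bound (Proposition~\ref{prop: TR RC bound}) into the first-stage bound (Proposition~\ref{prop: general bound}) and then specialize the generic graph filter $\mathrm{G}_f(\mbA)$ to the symmetric normalized filter $\tilde L = \tilde D^{-1/2}\tilde\mbA\tilde D^{-1/2}$ used in a GCN. The only nontrivial content is controlling $\|\mathrm{G}_f(\mbA)[i]\|_\infty$ in terms of the degree quantities $\degmax_i$ and $\degmin_i$; the rest is bookkeeping.

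First I would invoke Proposition~\ref{prop: general bound} verbatim to get
\[
R(\mbZ_u,h_\theta)\leq \sum_{i=1}^2 R_{\gamma}(\mbZ_{m_i},h_{\theta})+\frac{u_i}{u}\Big(\tfrac{1}{\gamma}\mathfrak{R}_{m_i+u_i}(\mathcal{H})+c_0 Q_i\sqrt{\min(m_i,u_i)}+\sqrt{\tfrac{S_iQ_i}{2}\log(1/\delta)}\Big),
\]
and then replace $\mathfrak{R}_{m_i+u_i}(\mathcal{H})$ by the upper bound from Proposition~\ref{prop: TR RC bound}, namely $C_1\|\mathrm{G}_f(\mbA)[i]\|_\infty^{2(d-1)}\prod_{j=1}^d U_F(j)/\sqrt{m_i+u_i}$ with $C_1 = B_f(\sqrt{2\log(2)d}+1)$. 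This already yields a bound of exactly the claimed shape, with the factor $\|\mathrm{G}_f(\mbA)[i]\|_\infty^{2(d-1)}$ in place of $\big((\degmax_i+1)/(\degmin+1)\big)^{d-1}$.

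The main step is therefore the estimate $\|\tilde L[i]\|_\infty \le \sqrt{(\degmax_i+1)/(\degmin+1)}$ for the symmetric normalized filter restricted to the $i$-th class, so that $\|\tilde L[i]\|_\infty^{2(d-1)} \le \big((\degmax_i+1)/(\degmin+1)\big)^{d-1}$. For a row $r$ of $\tilde L$, $\sum_s |\tilde L[r,s]| = \sum_{s:\,\tilde\mbA[r,s]=1} \tilde D[r,r]^{-1/2}\tilde D[s,s]^{-1/2} = \tilde D[r,r]^{-1/2}\sum_{s\sim r}\tilde D[s,s]^{-1/2}$, where $\tilde D[r,r] = \deg_r+1$. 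Bounding each $\tilde D[s,s]^{-1/2}$ by $(\degmin+1)^{-1/2}$ (the global or class-wise minimum augmented degree) and using that the number of nonzero entries in row $r$ equals $\tilde D[r,r] \le \degmax_i+1$, we get $\sum_s|\tilde L[r,s]| \le (\deg_r+1)^{1/2}(\degmin+1)^{-1/2} \le \big((\degmax_i+1)/(\degmin+1)\big)^{1/2}$; taking the max over rows $r$ in class $i$ gives the claim. Plugging this in and writing $U_F(j)$ for the Frobenius-norm bound of layer $j$ completes the proof.

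The main obstacle is not any deep estimate but getting the degree normalization exactly right: one must be careful about whether $\degmax_i,\degmin_i$ refer to the augmented adjacency matrix $\tilde\mbA$ (as the preliminaries state) so that the $+1$'s appear correctly, about whether $\degmin$ in the denominator is class-$i$-specific or global (the theorem statement writes $\deg^{\min}$ without a subscript, so I would interpret it as the global minimum augmented degree, which only weakens the bound and keeps it valid), and about the fact that $\|\mathrm{G}_f(\mbA)[i]\|_\infty$ as defined is a maximum of $\ell_1$ row-sums over rows indexed by class $i$ — all the neighbor indices $s$ can lie in any class, so the denominator bound must use a minimum degree over all nodes adjacent to class-$i$ nodes, which is $\ge \degmin$. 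Once these conventions are pinned down, the substitution is immediate and the stated theorem follows.
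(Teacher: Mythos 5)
Your proposal is correct and follows essentially the same route as the paper: the paper's proof is precisely "combine Proposition~\ref{prop: general bound} with Proposition~\ref{prop: TR RC bound} and substitute the bound $\|\tilde{D}^{-1/2}\tilde{\mbA}\tilde{D}^{-1/2}\|_{\infty}[i]\leq \sqrt{(\degmax_i+1)/(\deg^{\min}+1)}$," where that last estimate is the paper's Lemma~\ref{lem: bound on inf norm of L}, proved by exactly the row-sum computation you give (using $\sum_j \tilde{\mbA}_{ij}=\deg_i+1$ and bounding the neighbor degrees below by the global minimum). Your remarks on the degree conventions (augmented adjacency, global versus class-wise minimum) match the paper's usage, so nothing is missing.
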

% \begin{remark}\label{rem: gcn deg}
%     For GCN model with $\mathrm{G}_f(\mbA)=\tilde{D}^{-1/2}\tilde{A}\tilde{D}^{-1/2}$, we have $\|\mathrm{G}_f(\mbA)[i]\|_{\infty}^2=\sqrt{\frac{\deg^{\max}+1}{\deg^{\min}+1}}.$
% \end{remark}
\textbf{Convergence rate discussion:} For large values of $u_i$ and $m_i$ $(i\in{1,2})$, the value of $S_i$ approaches one. Moreover, we find $Q_i = O(1/\min(m_i,u_i))$ and $\mathfrak{R}_{m_i+u_i}(\mathcal{H})=O(1/(\sqrt{m_i+u_i}))$. Consequently, assuming $u_i\ll m_i$ for both $i=1,2$, yields an overall convergence rate of $O(1/u)$. Conversely, when $m_i \ll u_i$, the convergence rate becomes $O(\max(\frac{1}{\sqrt{m_1}},\frac{1}{\sqrt{m_2}})$. It's noteworthy that in the balanced case, where $u_1=u_2$ and $m_1=m_2$, assuming $m_i \ll u_i$ for $i=1,2$ recover the convergence rate applicable to balanced node classification, dependent on the maximum degree per class and the minimum degree.

\textbf{Maximum and Minimum Degrees:}  In balance transductive node classification, e.g., \citep{oono2019graph,cong2021provable,tang2023towards}, the upper bound on population risk is dependent on the maximum and minimum degree of all nodes. Our bound in Theorem~\ref{thm: main result} reveals that for an imbalance scenario, the upper bound depends on the maximum degree of nodes per class and the minimum degree of all graph nodes. Note that, this result can also be applied to balance scenario by assuming $m_i=m$ for all classes.

\textbf{Discussion:} Theorem \ref{thm: main result} reveals that the population risk upper bound is fundamentally dominated by the number of samples in the minority class. This theoretical insight suggests a straightforward yet powerful approach: increasing the representation of minority classes can improve classification performance. While this observation emerges directly from our theoretical analysis, its implications are significant as it applies broadly to any transductive imbalanced node classification task. In the following section, we propose an algorithm that leverages this insight by strategically assigning pseudo-labels to unlabeled nodes, thereby effectively augmenting the minority class representation and improving node-classification performance.

\section{UPL Algorithm}

\begin{figure*}[t]
    \centering
    \includegraphics[width=0.7\textwidth]{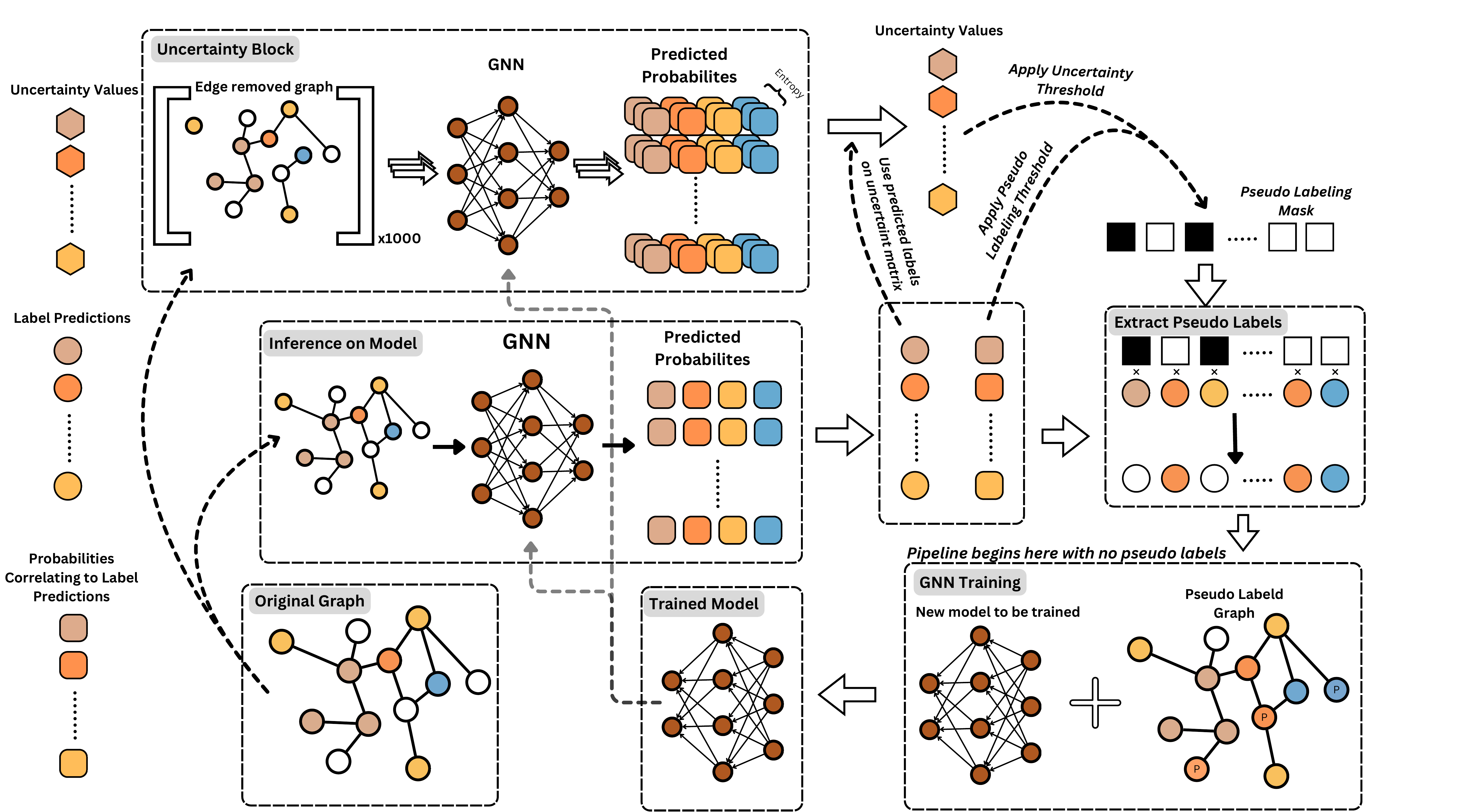}
    \caption{Pipeline of the UPL Algorithm}
    \label{fig:pipline}
\end{figure*}
In this section, we introduce the Uncertainty-aware Pseudo-labeling  (UPL) Algorithm to mitigate the imbalance effect in transductive node classification.

In the UPL algorithm, we add pseudo-labels to unlabeled nodes of minority classes to mitigate the effect of imbalance node classification. Our UPL algorithm contains two blocks: uncertainty-aware pseudo-labeling, and balanced-softmax loss function. The pipeline of the UPL algorithm is shown in Figure~\ref{fig:pipline}. Our UPL algorithm involves an iterative training approach, leveraging the pseudo-labeling technique to enhance the model's performance on graph-based data.

\subsection{Uncertainty-aware Pseudo-labeling}

Pseudo-labeling is a semi-supervised learning technique that enhances model performance by generating synthetic labels for unlabeled data using a previously trained model. This method effectively expands the training dataset, allowing the model to learn from a broader range of examples and improve accuracy. Different strategies for Pseudo-labeling are proposed. In this work inspired by the technique in \citep{rizve2021in}, we propose an iterative uncertainty-aware pseudo-labeling method where the uncertainty is computed based on the graph's topology. 

\textbf{Pseudo-labeling:} To refine the selection of pseudo-labels, we incorporate two thresholds: a lower threshold ($\eta_l$) and an upper threshold ($\eta_u$). As proposed by \citet{rizve2021in}, the lower threshold selects high-confidence samples. To avoid overfitting to these high-confidence samples, we introduce an upper threshold, $\eta_u$, which selects samples with intermediate confidence levels. This can be seen as a band-pass filter on the nodes, including those within a specific probability range. This method balances the use of confident predictions with the exploration of less certain ones, preventing the model from becoming biased towards the nodes' most confident nodes. To mitigate, the noise of pseudo-lableling process, we also add pseudo-labels to minority classes and avoid the pseudo-labels for majority classes during training.

\textbf{Uncertainty-Aware:} Our proposed Selective Edge Removal (SER) method addresses the challenge of poor calibration in GNN pseudo-labeling by introducing a topology-aware uncertainty estimation approach. The method works by strategically removing edges based on node degrees, where highly-connected nodes have a higher probability of edge removal, then making multiple inferences on these perturbed graph versions. By measuring the entropy of predictions across different perturbed versions and computing the variance of these entropy values, we can effectively estimate prediction uncertainty for each unlabeled node. Nodes showing consistent predictions (low entropy variance) across perturbations are considered more reliable and selected for pseudo-labeling, specifically retaining the lowest 90\% based on uncertainty values. This approach naturally accounts for both aleatoric and epistemic uncertainty while preserving the graph's core structure, offering a computationally efficient way to improve pseudo-label quality in GNN applications.

Finally, combining the thresholds for the pseudo-labeling process and the uncertainty estimation, we can select the pseudo-label for an unlabeled node, i.e.,
\begin{equation}
\begin{split}
     \tilde{Y}_i&=\mathds{1}(P(\hat{Y}_i|X_i) \geq \eta_l)\mathds{1}(P(\hat{Y}_i|X_i)\leq \eta_u)\\ &\mathds{1}\big(\mathrm{U}(P(\hat{Y}_i|X_i)) \leq Q_{\alpha}\big) ,
\end{split}
\end{equation}
where $P(\hat{Y}_i|X_i)$ is the prediction of our model for node sample $X_i$ over classes and $\mathrm{U}(\cdot)$ is variance of entropy of each nodes predictions among different perturbations,$\mathrm{U}(P(\hat{Y}_i|X_i)) = \mathbb{VAR}_k(H_k(i))$
where $H_k(i)$ is the entropy of predictions for $i$-th node in the $k$-th perturbed graph. Also $Q_{\alpha}$ represent the quantile function for some $\alpha\in(0,1)$ as follows:
\begin{equation}
\begin{split}
Q_{\alpha} = \text{quantile}\big({\mathrm{U}(P(\hat{Y}_j|X_j)) : j = 1, \dots, n}, \alpha\big).
\end{split}
\end{equation}

\subsection{Balanced Softmax loss function}
The balanced softmax loss function~\citep{ren2020balanced} is a variant of the traditional softmax loss designed to address the issue of class imbalance in multi-class classification problems. The primary idea is to introduce a class-specific weighting factor, $\beta_c$, which adjusts the contribution of each class to the overall loss. This weighting factor is inversely proportional to the class frequency, assigning higher weights to underrepresented classes and lower weights to overrepresented classes. The formulation of the balanced softmax loss is given by
\begin{equation}
 -\frac{1}{N} \sum_{i=1}^{N} \sum_{c=1}^{k} \beta_c y_{ic} \log(p_{ic}),
\end{equation}
where $N$ is the number of training samples, $k$ is the number of classes, $y_{ic}$ is the ground truth label (0 or 1) for sample $i$ and class $c$, $p_{ic}$ is the predicted probability for sample $i$ and $c$-th class , and $\beta_c$ is the weighting factor calculated as $\beta_c = \frac{(1 - \alpha)}{(1 - \alpha^{m_c})} \alpha^{(1 - m_c)}$. Here, $\alpha$ is a hyperparameter controlling the degree of balancing, and $m_c$ is the number of training samples belonging to $c$-th class. This adjustment ensures that the model pays more attention to the minority classes during training, improving performance on imbalanced datasets. Finally, the details and operational steps of the UPL algorithm are outlined in Appendix~\ref{app: UPL alg detail}.
 \vspace{-1em}
\section{Experiment}\label{sec: experiments}
In this section, we provide the results of experiments for our UPL algorithm. All experiments setup details are provided in Appendix~\ref{app: Exp details}. All experiments are conducted on GCN models. More GNN architectures, e.g., GAT and GraphSage, are studied in Appendix~\ref{App: more exp}.
\paragraph{Baselines:} In our experiments, we compare our proposed method against a blend of both foundational and state-of-the-art (SOTA) methods. In particular, we include
foundational baselines: Vanila, Re-weight \citep{japkowicz2002class}, DR-GCN~\citep{shi2020multi} and Balanced Softmax~\citep{ren2020balanced}. Furthermore, we also consider recent SOTA methods, including, PC Softmax~\citep{hong2021disentangling}, GraphSMOTE~\citep{zhao2021graphsmote}, GraphENS~\citep{park2022graphens}, TAM~\citep{song2022tam}, and Unreal \citep{yan2023unreal}.

\paragraph{Datasets:} In our experiments, we utilize citation network datasets—CiteSeer, Cora, and Pubmed \citep{sen2008collective}—which we categorize as homophilic graphs, where nodes represent documents and edges represent citation links. For heterophilic graphs, we employ Chameleon and Squirrel \citep{rozemberczki2021multi}, where nodes represent Wikipedia pages and edges denote links between them, as well as Wisconsin, a graph composed of web pages crawled from the Internet by \citet{craven2000learning}. Furthermore, we use the Amazon Computers and Amazon Photos datasets \cite{shchur2019pitfallsgraphneuralnetwork} in Appendix \ref{App: more exp}.

 \textbf{Imbalance Learning:} In a classification scenario with training dataset $z = \{z_1, \ldots, z_n\}$ where $z_i=(x_i,y_i)$, the imbalance occurs when the distribution $\mathbf{P}_k$ over $k$ classes within the dataset has larger probability mass points over some classes. The imbalance severity is often quantified using an imbalance ratio $\rho$, which measures the ratio between the number of samples in the most populated class (majority class) and the least populated class (minority class) and can be defined as follows,
\begin{equation}
     \quad \rho = \frac{\max_{j\in[k]} |m_j|}{\min_{j\in[k]} |m_j|}.
\end{equation} 
\vspace{-1em}

\textbf{Hyper-parameter Sensitivity}: We conducted a sensitivity analysis on the two thresholds used in pseudo-labeling to evaluate the model's performance at various values of $\eta_l$ and $\eta_u$. The results of this parameter sweep are presented in Figure \ref{fig:sweep_bounds}.

The chart on the left shows the F1-score as $\eta_l$ is varied within the range $(0.25, 1.0)$, while $\eta_u$ is set to $\min(\eta_l + 0.3, 1.0)$. Based on this lower-threshold sweep, we observe that the optimal value of $\eta_l$ for this case lies within the range $(0.25, 0.45)$. A similar trend is observed across other datasets.

The right plot in Figure \ref{fig:sweep_bounds} depicts the results of varying $\eta_u$ within the range $(0.3, 1.0)$, with $\eta_l$ fixed at 0.3. These results further support the importance of using an upper threshold. Specifically, setting $\eta_u$ within the range $(0.35, 0.45)$ improves the overall F1-score on both the validation and test sets.

\begin{figure}[ h!]
    \centering
    \includegraphics[width=1.0\columnwidth]{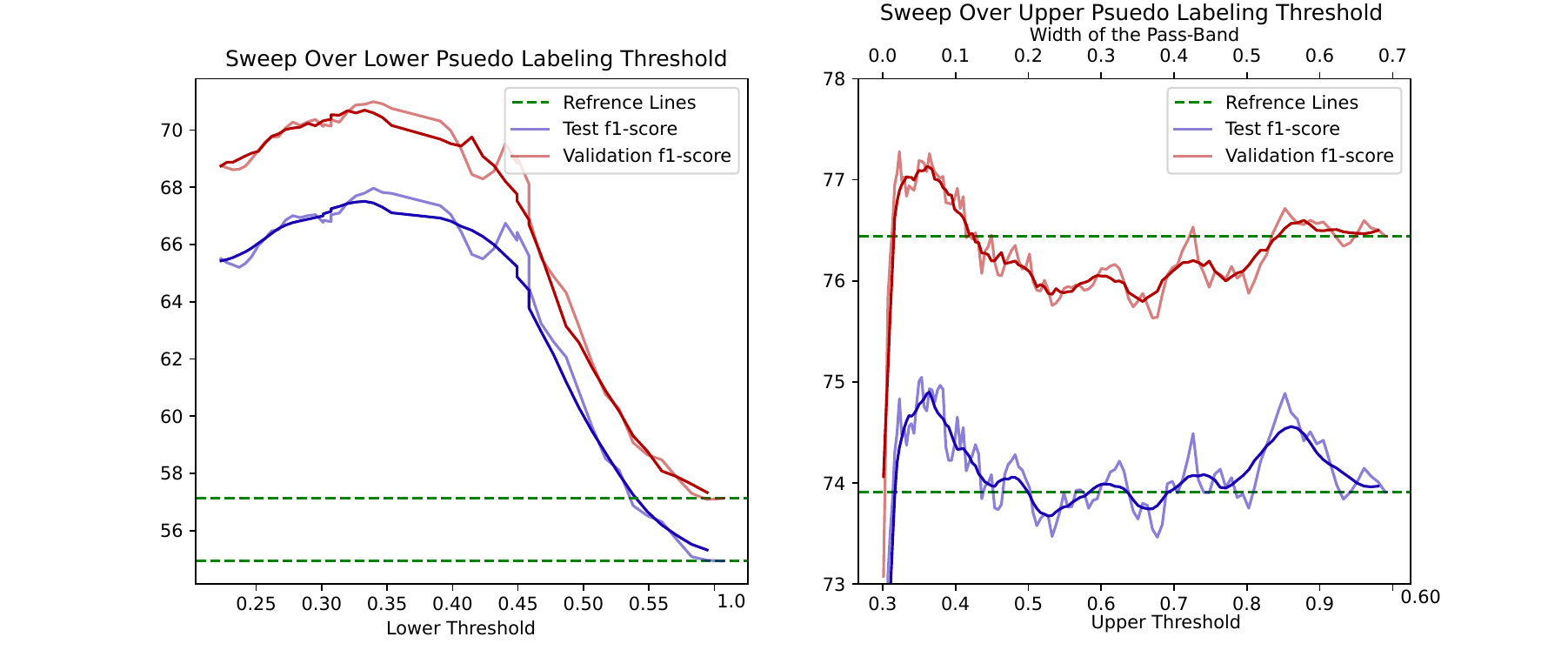}
    \caption{F1-score for different values of upper and lower thresholds for pseudo-labeling. The Figure on the left indicates the sweeping over $\eta_l$ while $\eta_u=0.3+\eta_l$ for the CiteSeer dataset. The Figure on the right represents sweeping over $\eta_u$ from $0.3$ to $1.0$ for Cora. In both Figures, the red line represents the validation results, and blue shows the result for the test set.}
    \label{fig:sweep_bounds}
\end{figure}

% \begin{figure*}[t]
%     \centering
%     \includegraphics[width=0.8\textwidth]{figures/nips/Input Graph v2.pdf}
%     \caption{Pipeline of the UPL Algorithm}
%     \label{fig:pipline}
% \end{figure*}

\subsection{Experiment results}
To further study the effect of UPL, we have conducted experiments on multiple datasets, a fraction of which has been shown in Table \ref{tb:main_chart_homo} and \ref{tb:main_chart_hetero}.
We report averaged balanced accuracy (bAcc)\footnote{Balanced accuracy is calculated as $\frac{1}{2}\big(\frac{\mathrm{TP}}{\mathrm{TP}+\mathrm{FN}}+\frac{\mathrm{TN}}{\mathrm{TN}+\mathrm{FP}} \big)$ where $\mathrm{TP}$ is true positive, $\mathrm{FN}$ is false negative and $\mathrm{TN}$ is true negative.} and F1-score with the standard errors for 10 repetitions on the GCN model. In our experiments, datasets are generated with an imbalance ratio of $\rho=10$. We explore other imbalance ratios in Appendix~\ref{App: more exp}.

\begin{table*}[ht!]
\caption{\small Experimental results of our algorithm (UPL) compared to other baselines. All the experiments have been done using the GCN architecture. We report the averaged balanced accuracy, averaged F1-score and the standard error of each experiment for Cora, CiteSeer and PubMed datasets. All the results have been calculated over 10 repetitions and 10 pseudo-labeling iterations.}
\begin{center}
\begin{scriptsize}
 \resizebox{0.9\textwidth}{!}{
\begin{tabular}{@{}rlcc|cc|cc@{}}
\toprule
 & \multirow{1}{*}{\textbf{Dataset}} & \multicolumn{2}{c}{Cora} & \multicolumn{2}{c}{CiteSeer} & \multicolumn{2}{c}{PubMed}  \\ 
\cline{1-8}\rule{0pt}{2.2ex}
& \textbf{\makecell{Imbalance Ratio \\($\rho=10$)}} & bAcc. & F1 & bAcc. & F1 & bAcc. & F1 \\
\cline{1-8}
\multirow{11}{*}
                     % 0
                     & Vanilla 
                     & 53.58 \tiny{$\pm 0.70$} & 48.52 \tiny{$\pm 1.09$}
                     & 35.29 \tiny{$\pm 0.34$} & 22.62 \tiny{$\pm 0.39$}
                     & 62.22 \tiny{$\pm 0.35$} & 48.99 \tiny{$\pm 1.02$} \\

                     & Re-Weight\citep{japkowicz2002class}
                     & 65.52 \tiny{$\pm 0.84$}& 65.54 \tiny{$\pm 1.20$}
                     & 44.52 \tiny{$\pm 1.22$}& 38.85 \tiny{$\pm 1.62$}
                     & 70.17 \tiny{$\pm 1.25$}& 66.37 \tiny{$\pm 1.73$}
                     \\
                    
                     & PC Softmax \cite{hong2021disentangling}
                     & 67.79 \tiny{$\pm 0.92$}& 67.39 \tiny{$\pm 1.08$}
                     & 49.81 \tiny{$\pm 1.12$}& 45.55 \tiny{$\pm 1.26$}
                     & 70.20 \tiny{$\pm 0.60$}& 68.83 \tiny{$\pm 0.73$}
                     \\
                     
                     & DR-GCN \cite{shi2020multi}
                     & 60.17 \tiny{$\pm 0.83$}& 59.31 \tiny{$\pm 0.97$}
                     & 42.64 \tiny{$\pm 0.75$}& 38.22 \tiny{$\pm 1.22$}
                     & 65.51 \tiny{$\pm 0.81$}& 64.95 \tiny{$\pm 0.53$}
                     \\
                     & GraphSMOTE \cite{zhao2021graphsmote}
                     & 62.66 \tiny{$\pm 0.83$} & 61.76 \tiny{$\pm 0.96$}
                     & 34.26 \tiny{$\pm 0.89$} & 28.31 \tiny{$\pm 1.48$}
                     & 68.94 \tiny{$\pm 0.89$} & 64.17 \tiny{$\pm 1.43$}\\

                     % 2
                     & BalancedSoftmax \cite{ren2020balanced}
                     & 67.75 \tiny{$\pm 1.09$} & 66.47 \tiny{$\pm 1.27$}
                     & 52.09 \tiny{$\pm 1.61$} & 48.55 \tiny{$\pm 2.01$}
                     & 71.78 \tiny{$\pm 0.82$} & 71.18 \tiny{$\pm 0.94$} \\

                     % 1
                     & GraphENS \cite{park2022graphens}+ TAM \cite{song2022tam}
                     & 69.42 \tiny{$\pm 0.90$} & 68.70 \tiny{$\pm 1.08$}
                     & 53.10 \tiny{$\pm 1.82$} & 50.67 \tiny{$\pm 2.28$}
                     & 71.70 \tiny{$\pm 0.95$} & 69.01 \tiny{$\pm 1.72$} \\

                     & Unreal \cite{yan2023unreal}
                     & \textbf{78.33} \tiny{$\pm 1.04$} & \textbf{76.44} \tiny{$\pm 1.06$}
                     & \underline{65.63} \tiny{$\pm 1.38$} & \underline{64.94} \tiny{$\pm 1.38$}
                     & \underline{75.35} \tiny{$\pm 1.41$} & \underline{73.65} \tiny{$\pm 1.43$} \\
                     
                     \cline{1-8}

                     % 3
                     & \textbf{UPL}
                     & \underline{76.16} \tiny{$\pm 0.76$} & \underline{74.53} \tiny{$\pm 0.85$}
                    & \textbf{68.18} \tiny{$\pm 0.27$} & \textbf{67.78} \tiny{$\pm 0.25$}
                    & \textbf{77.50} \tiny{$\pm 0.44$} & \textbf{77.19} \tiny{$\pm 0.43$} \\

\bottomrule
\end{tabular}}

\end{scriptsize}
\end{center}
\label{tb:main_chart_homo}
\vspace{-0.05in}
\end{table*}

\begin{table*}[ht!]
\caption{\small Experimental results of our algorithm (UPL) compared to other baselines. All the experiments have been done using the GCN architecture. We report the averaged balanced accuracy, averaged F1-score and the standard error of each experiment for Cora, CiteSeer and PubMed datasets. All the results have been calculated over 10 repetitions and 10 pseudo-labeling iterations.}
\begin{center}
\begin{scriptsize}
\setlength{\columnsep}{1pt}%
 \resizebox{0.9\textwidth}{!}{
\begin{tabular}{@{}rlcc|cc|cc@{}}
\toprule
 &\textbf{Dataset} & \multicolumn{2}{c}{Chameleon} & \multicolumn{2}{c}{Squirrel} & \multicolumn{2}{c}{Wisconsin}  \\ 
\cline{1-8}\rule{0pt}{2.2ex}
& \textbf{\makecell{Imbalance Ratio\\ ($\rho=10$)}} & bAcc. & F1 & bAcc. & F1 & bAcc. & F1 \\
\cline{1-8}
\rule{0pt}{2.5ex}  
                    % 0
                     & Vanilla 
                     & 57.01 \tiny{$\pm 0.53$} & 55.61 \tiny{$\pm 0.67$}
                     & 38.07 \tiny{$\pm 0.57$} & 34.81 \tiny{$\pm 0.69$}
                     & 34.51 \tiny{$\pm 2.31$} & 31.45 \tiny{$\pm 2.66$} \\

                     & Re-Weight \cite{japkowicz2002class}
                     & 36.07 \tiny{$\pm 0.87$}& 35.61 \tiny{$\pm 0.81$}
                     & 26.92 \tiny{$\pm 0.53$}& 25.04 \tiny{$\pm 0.59$}
                     & 44.13 \tiny{$\pm 3.08$}& 40.74 \tiny{$\pm 3.27$}
                     \\
                    
                     & PC Softmax \cite{hong2021disentangling}
                     & 36.86 \tiny{$\pm 1.04$}& 36.24 \tiny{$\pm 1.01$}
                    
                     & 26.49 \tiny{$\pm 0.59$}& 25.73 \tiny{$\pm 0.49$}
                     
                     & 30.90 \tiny{$\pm 3.10$}& 28.15 \tiny{$\pm 2.16$}
                     \\
                     
                     & DR-GCN \cite{shi2020multi}
                     & 33.34 \tiny{$\pm 0.81$}& 29.60 \tiny{$\pm 0.79$}
                     
                     & 23.34 \tiny{$\pm 0.43$}& 18.20 \tiny{$\pm 0.49$}
                     
                     & 29.44 \tiny{$\pm 1.36$}& 27.08 \tiny{$\pm 1.37$}
                     \\
                     
                     & GraphSMOTE \cite{zhao2021graphsmote}
                     & 52.38 \tiny{$\pm 0.77$}& 50.03 \tiny{$\pm 0.76$}
                     & 41.30 \tiny{$\pm 0.47$}& 39.07 \tiny{$\pm 0.60$}
                     & 45.36 \tiny{$\pm 4.21$}& 40.91 \tiny{$\pm 4.39$}
                     \\

                     %2
                     & BalancedSoftmax \cite{ren2020balanced}
                     & \underline{58.60} \tiny{$\pm 1.06$} & \underline{58.20} \tiny{$\pm 1.00$}
                     & 41.81 \tiny{$\pm 0.64$} & 41.27 \tiny{$\pm 0.71$}
                     & \underline{}{44.44} \tiny{$\pm 3.46$} & \underline{41.29} \tiny{$\pm 2.85$} \\
                     
                     %1
                     & GraphENS \cite{park2022graphens}+ TAM \cite{song2022tam}
                     & 40.98 \tiny{$\pm 0.85$} & 38.93 \tiny{$\pm 0.84$}
                     & 27.98 \tiny{$\pm 0.90$} & 24.04 \tiny{$\pm 0.70$}
                     & 41.38 \tiny{$\pm 2.78$} & 34.56 \tiny{$\pm 2.22$} \\

                     & Unreal
                     & 58.18 \tiny{$\pm 0.94$} & 56.98 \tiny{$\pm 1.03$}
                     & \textbf{45.65} \tiny{$\pm 1.08$} & \textbf{44.78} \tiny{$\pm 1.07$}
                     & 41.26 \tiny{$\pm 5.19$} & 32.14 \tiny{$\pm 2.15$}\\

                     \cline{1-8}
                     %3
                     & \textbf{UPL}
                     & \textbf{61.19} \tiny{$\pm 0.66$} & \textbf{60.71} \tiny{$\pm 0.71$}
                     & \underline{43.34} \tiny{$\pm 0.25$} & \underline{42.82} \tiny{$\pm 0.07$}
                     & \textbf{48.70} \tiny{$\pm 2.88$} & \textbf{42.72} \tiny{$\pm 2.38$} \\

\bottomrule
\end{tabular}}

\end{scriptsize}
\end{center}
\label{tb:main_chart_hetero}
\vspace{-0.05in}
\end{table*}

As shown in Tables \ref{tb:main_chart_homo} and \ref{tb:main_chart_hetero}, our UPL algorithm outperforms other baselines across both homophilic and heterophilic datasets in most cases. Notably, UPL demonstrates significantly lower performance variance than competing methods, indicating better reliability in its predictions. Table \ref{tb:main_chart_hetero} also highlights a common challenge which is a marked decline in performance on heterophilic datasets. In contrast, our method consistently delivers better results in most cases, even in these challenging scenarios. Moreover, based on theoretical results and the fact that the maximum degree of nodes per class in heterophilic datasets is large, then we expect lower accuracy over these datasets.

\begin{table*}[htb!]
\vspace{-1em}
\caption{\small Experimental results of Ablation study of our proposed algorithm. All the experiments have been done using the GCN architecture. We report the averaged balanced accuracy, averaged F1-score and the standard error of each experiment for all datasets.}
\begin{center}
\begin{scriptsize}
\setlength{\columnsep}{1pt}%
\centering
  \resizebox{\textwidth}{!}{
\begin{tabular}{@{}rlcc|cc|cc|cc|cc|cc@{}}
\toprule
 & \textbf{Dataset} & \multicolumn{2}{c}{Cora} & \multicolumn{2}{c}{CiteSeer} & \multicolumn{2}{c}{PubMed} & \multicolumn{2}{c}{Chameleon} & \multicolumn{2}{c}{Squirrel} & \multicolumn{2}{c}{Wisconsin}  \\ 
\cline{1-14}\rule{0pt}{2.2ex}
& \textbf{\makecell{Imbalance Ratio\\ ($\rho=10$)}} & bAcc. & F1 & bAcc. & F1 & bAcc. & F1 & bAcc. & F1 & bAcc. & F1 & bAcc. & F1\\
\cline{1-14}
\rule{0pt}{2.5ex}  
      % 0
                     & Vanilla 
& 53.28 \tiny{$\pm 0.65$} & 48.12 \tiny{$\pm 1.03$}
& 35.53 \tiny{$\pm 0.33$} & 22.67 \tiny{$\pm 0.41$}
& 61.96 \tiny{$\pm 0.33$} & 48.97 \tiny{$\pm 0.89$}
& 57.24 \tiny{$\pm 0.63$} & 55.82 \tiny{$\pm 0.75$}
& 37.70 \tiny{$\pm 0.58$} & 34.45 \tiny{$\pm 0.71$}
& 34.51 \tiny{$\pm 2.31$} & 31.45 \tiny{$\pm 2.66$}
\\
                    % 1
                     & + BS
& 67.51 \tiny{$\pm 1.27$} & 66.29 \tiny{$\pm 1.32$}
& 51.83 \tiny{$\pm 1.58$} & 48.56 \tiny{$\pm 1.87$}
& 72.22 \tiny{$\pm 0.83$} & 71.31 \tiny{$\pm 0.99$}
& 58.98 \tiny{$\pm 0.86$} & 58.44 \tiny{$\pm 0.87$}
& 41.76 \tiny{$\pm 0.49$} & 40.91 \tiny{$\pm 0.50$}
& 44.44 \tiny{$\pm 3.46$} & 41.29 \tiny{$\pm 2.85$}
\\
                     % 2
                     & + PL
& 68.86 \tiny{$\pm 1.51$} & 66.45 \tiny{$\pm 1.86$}
& 48.81 \tiny{$\pm 2.62$} & 41.75 \tiny{$\pm 3.74$}
& 73.19 \tiny{$\pm 1.42$} & 71.98 \tiny{$\pm 2.08$}
& 60.24 \tiny{$\pm 0.90$} & 59.63 \tiny{$\pm 0.95$}
& 39.34 \tiny{$\pm 0.56$} & 37.15 \tiny{$\pm 0.58$}
& 38.62 \tiny{$\pm 2.28$} & 36.51 \tiny{$\pm 2.54$}
 \\
                    % 3
                     & + Uncertainty
& 53.28 \tiny{$\pm 0.65$} & 48.12 \tiny{$\pm 1.03$}
& 35.53 \tiny{$\pm 0.33$} & 22.67 \tiny{$\pm 0.41$}
& 61.96 \tiny{$\pm 0.33$} & 48.97 \tiny{$\pm 0.89$}
& 57.02 \tiny{$\pm 0.76$} & 55.78 \tiny{$\pm 0.85$}
& 37.78 \tiny{$\pm 0.57$} & 34.68 \tiny{$\pm 0.75$}
& 34.51 \tiny{$\pm 2.31$} & 31.45 \tiny{$\pm 2.66$}
\\
                     % 4
                     & + BS + PL
& \underline{75.27} \tiny{$\pm 0.92$} & \underline{73.59} \tiny{$\pm 1.02$}
& \underline{64.70} \tiny{$\pm 1.17$} & \underline{64.34} \tiny{$\pm 1.26$}
& \underline{76.08} \tiny{$\pm 0.28$} & \underline{76.08} \tiny{$\pm 0.29$}
& \textbf{61.26} \tiny{$\pm 0.73$} & \textbf{60.73} \tiny{$\pm 0.74$}
& \underline{41.83} \tiny{$\pm 0.42$} & \underline{41.04} \tiny{$\pm 0.48$}
& \textbf{47.72} \tiny{$\pm 4.08$} & \textbf{44.03} \tiny{$\pm 3.44$}
\\
                     % 5
                     & + Uncertainty + BS 
& 67.51 \tiny{$\pm 1.27$} & 66.29 \tiny{$\pm 1.32$}
& 51.83 \tiny{$\pm 1.58$} & 48.56 \tiny{$\pm 1.87$}
& 72.22 \tiny{$\pm 0.83$} & 71.31 \tiny{$\pm 0.99$}
& 59.04 \tiny{$\pm 0.85$} & 58.44 \tiny{$\pm 0.88$}
& 41.70 \tiny{$\pm 0.48$} & 40.88 \tiny{$\pm 0.54$}
& 44.44 \tiny{$\pm 3.46$} & 41.29 \tiny{$\pm 2.85$}
\\
                     % 6
                     & + Uncertainty + PL
& 67.51 \tiny{$\pm 2.16$} & 65.28 \tiny{$\pm 2.59$}
& 40.01 \tiny{$\pm 2.01$} & 28.25 \tiny{$\pm 2.31$}
& 74.85 \tiny{$\pm 0.75$} & 73.80 \tiny{$\pm 0.89$}
& 59.21 \tiny{$\pm 0.74$} & 58.14 \tiny{$\pm 0.77$}
& 38.39 \tiny{$\pm 0.67$} & 35.86 \tiny{$\pm 0.87$}
& 36.92 \tiny{$\pm 2.62$} & 34.74 \tiny{$\pm 2.76$}
\\
                    \cdashline{2-14}
                     % 7
                     & UPL
& \textbf{76.16} \tiny{$\pm 0.76$} & \textbf{74.53} \tiny{$\pm 0.85$}
& \textbf{68.18} \tiny{$\pm 0.27$} & \textbf{67.78} \tiny{$\pm 0.25$}
& \textbf{77.50} \tiny{$\pm 0.44$} & \textbf{77.19} \tiny{$\pm 0.43$}
& \underline{61.19} \tiny{$\pm 0.66$} & \underline{60.71} \tiny{$\pm 0.71$}
& \textbf{42.34} \tiny{$\pm 0.67$} & \textbf{41.76} \tiny{$\pm 0.68$}
& \underline{}{48.70} \tiny{$\pm 2.88$} & \underline{42.72} \tiny{$\pm 2.38$}
\\
\bottomrule
\end{tabular}}

\end{scriptsize}
\end{center}
\label{tb:Ablation_homo}
\end{table*}
\vspace{-0.05in}

\textbf{Uncertainty methods:} We evaluate our approach's uncertainty estimation on the Cora dataset, comparing against two uncertainty estimation methods for GNN as baselines: CaGCN~\cite{wang2021be} and ALG~\cite{zhang2021alg} in Table~\ref{tb:uncertainty_methods}. Our results show that imbalanced data significantly impacts uncertainty estimation performance in existing graph-based models.
% \todoa{Could you add other dataset also? }\todot{(Done)}
\begin{table}[H]
\centering
\caption{Comparison of our model with CaGCN and Alg as uncertainty estimation in GNN utilizing uncertainty in graphs has been tested under the imbalance scenario with ratio $\rho=10$.}
\label{tb:uncertainty_methods}
\vspace{-0.5em}
\resizebox{0.6\columnwidth}{!}{\begin{tabular}{c|c|c}
\toprule
\textbf{Methods} & \textbf{Cora} & \textbf{CiteSeer} \\ \cline{1-3}
CaGCN 
& $46.23$ \tiny{$\pm 0.49$}
& $23.42$ \tiny{$\pm 0.74$}
\\ 
Alg   
& $67.48$ \tiny{$\pm 0.76$}
& $27.50$ \tiny{$\pm 0.85$}
\\ 
UPL   
& $74.51$ \tiny{$\pm 0.65$}
& $67.78$ \tiny{$\pm 0.25$}
\\
\bottomrule
\end{tabular}}
\end{table}
 
\textbf{Comparison with UNREAL:} Our algorithm, UPL, demonstrates notable computational advantages over the UNREAL algorithm \citep{yan2023unreal}. UPL's efficiency stems from two key factors: first, it employs a more selective pseudo-labeling process that focuses on minority classes, resulting in fewer pseudo-labeled nodes overall. Second, UPL eliminates the need for two computationally intensive components present in UNREAL—node reordering and clustering—which contribute significantly to that algorithm's complexity. Instead, UPL introduces a novel framework that incorporates uncertainty estimation and an upper threshold for pseudo-labeling. In contrast, the UNREAL algorithm operates through a series of three main steps, 
\begin{itemize}[nosep, leftmargin=*]
    \item Dual Pseudo-tag Alignment Mechanism (DPAM): This step involves clustering all nodes, which inherently have a complexity of $O(N^2)$ for $N$ nodes, and then inferring the GCN model.
    \item Node-Reordering: Utilizes the Rank-Biased Overlap (RBO) algorithm \citep{webber2010similarity} to merge two sorted lists generated by clustering and GCN inference. This merging process also presents a complexity of $O(N^2 )$ for $N$ nodes.
    \item Discarding Geometrically Imbalanced Nodes (DGIN): This step computes the second nearest cluster center for each node, leading to a complexity of $O(Nk)$, where $k$ is the predefined number of clusters,
\end{itemize}
where each iterated $T$ times, potentially leading to increased computational overhead. By reducing both the number of operations each step, UPL achieves improved efficiency without sacrificing performance. In particular, our UPL algorithm simplifies the computational process significantly,
\begin{itemize}[nosep, leftmargin=*]
    \item Uncertainty estimation: UPL involves sampling the edges to be removed based on their distribution to estimate the uncertainty. This process is iterated $S_k$ times.
    \item Subsequently, similar to UNREAL, we infer the GCN model.
\end{itemize}
Notably our complexity is less than the UNREAL, as we avoid DGIN and RBO processes in UNREAL. More discussion is provided in App.~\ref{app:comp_unreal}.
\vspace{-0.1in}
\subsection{Ablation study}
We evaluate key components' individual and combined effects on our UPL algorithm to discover their contributions in transductive node classification tasks under imbalance scenario, for both homophilic and heterophilic datasets, in Table~\ref{tb:Ablation_homo}, respectively. For this purpose, we use BalancedSoftmax \textbf{(BS)}, which adjusts softmax to favor minority classes; Uncertainty-aware Pseudo-labeling \textbf{(UPL)}, which uses generated pseudo-labeled samples to generate more training data; aiming to align predicted class distributions with actual distributions. In Tables~\ref{tb:Ablation_homo}, The Vanilla model serves as our baseline which is log-loss function.  \newline The results demonstrate the effectiveness of various strategies for addressing class imbalance in node classification tasks. Across all datasets, the UPL method consistently excels, showcasing the robustness of integrating approaches. Notably, combinations like BS + UPL frequently lead to significant performance improvements, highlighting the synergy between direct imbalance mitigation and the strategic use of pseudo-labeled data. 
\vspace{-1em}
\section{Conclusion and Future Works}
In this study, we make several key contributions to transductive node classification in imbalanced scenarios. First, we establish a theoretical upper bound on the population risk that depends on the maximum degree node within each class. Second, we develop an uncertainty-aware algorithm for identifying pseudo-labels for minority class nodes, introducing a novel selective edge removal method for uncertainty estimation. Our approach demonstrates superior performance compared to baseline methods across both homophilic and heterophilic datasets.

 While our theoretical analysis currently focuses on $\gamma$-margin loss functions in binary classification, several promising directions emerge for future work. From a theoretical perspective, we aim to extend our results to multi-class scenarios and broader loss functions, as well as expand the analysis to Inductive node classification settings where unlabeled nodes are not observed during training. Additionally, we plan to investigate theoretical bounds for heterophilic graphs to strengthen our framework's theoretical foundation.

As future work, we envision integrating our approach with H2GCN approach~\citep{zhu2020beyond} to enhance performance on heterophilic datasets. We also plan to incorporate advanced pseudo-labeling techniques such as Meta-Pseudo Label \citep{pham2021meta} or Flexmatch \citep{zhang2021flexmatch} to improve label quality. Finally, we intend to explore subgraph-based selective edge removal approaches for more sophisticated uncertainty estimation.
% \clearpage
% \newpage

\clearpage
\section*{Impact Statement}
This paper presents work whose goal is to advance the field of 
Machine Learning. There are many potential societal consequences 
of our work, none which we feel must be specifically highlighted here.

\bibliographystyle{icml2025}
\bibliography{biblography}

\newpage
\appendix
\onecolumn

\addcontentsline{toc}{section}{Appendix} % Add the appendix text to the document TOC
\part{Appendix} % Start the appendix part
\parttoc % Insert the appendix TOC

\newpage
\section{Notations}\label{app:notation}
Notations in this paper are summarized in Table~\ref{Table: notation}.

\begin{table}[ht]
    \centering
    \caption{Summary of notations in the paper}
    \resizebox{\linewidth}{!}{
    \begin{tabular}{cl|cl}
        \toprule
        Notation & Definition & Notation & Definition \\
        \midrule
        $\mathcal{A}$ & Adjacency matrices space & $X_i$ & $i$-th node feature \\
        $\degmax_i$ & Maximum node degree i-th class & $\degmin_i$ & Minimum node degree i-th class \\
        $Y_i$ & $i$-th node's label & $Z_i$ & $i$-th node sample pair $(X_i, Y_i)$ \\
        $\mbZ_m$ & The set of data training samples & $\mbZ_u$ & The unlabeled samples \\
        $P(.)$ & Prediction of model & $m_i$ & Number of nodes in $i$-th class \\
        $k$ & Number of classes & $P_j$ & $\frac{|m_j|}{\sum_{i=1}^k{|m_i|}}$ \\
        $d_{\min}$ & Minimum node degree of among all classes & $\mathrm{G}_f(\mbA)$ & Graph filter with input matrix $\mbA$ \\
        $l(\hat y , y)$ & Loss function & $h_\theta$ & Parameterized model \\
        $B_f$ & $l_2$-ball of radius & $\theta_i$ & Parameter matrix of i-th layer \\
        $\varphi$ & Activation function &  $\mathfrak{R}_{m+u}(\mathcal{H})$ & Empirical Rademacher complexity   \\
       
          $R(Z_u, h_\theta)$ & True risk over unlabeled dataset  & $R(Z_m, h_\theta)$ & Empirical risk over labeled dataset\\
        $U_F(i)$ & Maximum of Frobenius norm i-th layer parameters & $U(.)$ & Uncertainty Function \\
        $\eta_l$ & Lower thresholds of Pseudo-labeling process & $\eta_u$ & Upper thresholds of Pseudo-labeling process \\
        $\KLr(P\|Q)$ & $\int_{\mathbb{Z}} \log(\mrd P/\mrd Q) \mrd P$ & $H(P,Q)$ & $\int_{\mathbb{Z}} \log(\mrd P) \mrd Q$
        \\$B_f$ & Bound on node features  & $\rho$ & Imbalance Ratio \\
        $\norm{\mathbf{Y}}_{F}$ & $\sqrt{\sum_{j=1}^k\sum_{i=1}^q\mathbf{Y}^2[j,i]}$ & $\norm{\mathbf{Y}}_{\infty}$ & $\max_{1\leq j\leq k}\sum_{i=1}^q|\mathbf{Y}[j,i]|$ \\
        $\norm{\mathbf{Y}}_{2}$ & $\left(\sum_{j=1}^q |q| x_j^2\right)^{1 / 2}$  & $Q_{\alpha}$ & Quantile of order $\alpha$\\
        $t$ & Number of itterations & $S_k$ & Number of edges removed in each itteration\\
        \bottomrule
    \end{tabular}}
    \label{Table: notation}
\end{table}

\section{Other Related Works}\label{app:other_related_work}
\textbf{Pseudo-labeling in semi-supervised learning:} Pseudo-labeling, introduced in~\citep{lee2013pseudo}, involves training a model using labeled data and assigning pseudo-labels to the unlabeled data based on the model's predictions. These pseudo-labels are then used to construct another model, which is trained in a supervised manner using both labeled and pseudo-labeled data. Under different scenarios, it is shown that the pseudo-labeling is effective,  \citep{arazo2020pseudolabeling} and \citep{rizve2021in}. \citep{kou2023how} shows that semi-supervised learning with pseudo-labeling can achieve near-zero test loss under some conditions. The study by \citet{DBLP:journals/corr/abs-2003-10580} introduced meta pseudo-labeling. This method enhanced pseudo-labels' accuracy by incorporating feedback from the student model. \citep{rizve2021in} proposed confidential-based pseudo-label generation for training a network with unlabeled data. \citep{arazo2020pseudolabeling} suggests soft-labeling with the MixUp method to reduce over-fitting to
model predictions and confirmation bias. In contrast, our work is focused on transductive node classification, and we employed the Pseudo-labels to mitigate the imbalance effect.

\textbf{Imbalance Classification:} Effective classification in the presence of imbalanced data is an important research field in supervised and semi-supervised learning, given that a significant class imbalance is commonly found in numerous real-world contexts \citep{herland2018big,rao2006data,cieslak2006combating} such as fraud detection in finance applications \citep{wei2013effective}. This imbalance often presents challenges, as many classifiers tend to prefer the majority class, sometimes to the extent of completely overlooking the minority class. 
Traditional Class Imbalance Learning methods typically fall into two broad categories: data-level methods and algorithm-level methods. Data-level techniques aim to rebalance the training data itself to mitigate class imbalance. This includes strategies like over-sampling the minority class \citep{chawla2002smote,he2008adasyn,mullick2019generative}, under-sampling the majority class \citep{laurikkala2001improving,mani2003knn}, and hybrid methods that combine both over-sampling and under-sampling \citep{ramentol2012smote,saez2015smote}. On the other hand, algorithm-level methods focus on modifying the learning algorithms to better handle class imbalance. These methods encompass cost-sensitive learning \citep{tang2008svms,zhou2005training}, ensemble learning \citep{seiffert2009rusboost, guo2004learning, chawla2003smoteboost}, and engineering of specialized loss functions \citep{cao2019learning, dong2018imbalanced, cui2019class}.

\section{Proof and details of Section~\ref{sec: gen analysis}}\label{app: sec: gen analysis}

\begin{lemma}[Bound on infinite norm of the symmetric normalized graph filter]\label{lem: bound on inf norm of L}
Consider a graph sample $G$ with adjacency matrix $A$. For the symmetric normalized graph filter, we have $\norm{\tilde{D}^{-1/2}(A+I)\tilde{D}^{-1/2}}_{\infty}[k]\leq \sqrt{\frac{\deg^{\max}_k+1}{\deg^{\min}+1}}$ where $\deg^{\max}_k$ is the maximum degree of graph $G$ with adjacency matrix $A$ among the nodes in $k$-th class. 
\end{lemma}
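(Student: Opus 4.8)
\textbf{Proof proposal for Lemma~\ref{lem: bound on inf norm of L}.}

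The plan is to directly estimate the $\infty$-norm of the relevant rows of $\tilde{L} = \tilde{D}^{-1/2}(A+I)\tilde{D}^{-1/2}$, using the definition $\norm{\mathbf{Y}}_{\infty} = \max_j \sum_i |\mathbf{Y}[j,i]|$ restricted to rows $j$ belonging to the $k$-th class. Write $\tilde{A} = A + I$, so that $\tilde{A}[j,j] = 1$ and $\tilde{A}[j,i] = A[j,i] \in \{0,1\}$ for $i \neq j$, and let $\tilde{d}_j = \sum_i \tilde{A}[j,i] = \deg_j + 1$ be the augmented degree of node $j$. Then the $(j,i)$ entry of $\tilde{L}$ is $\tilde{A}[j,i]/\sqrt{\tilde{d}_j \tilde{d}_i}$, which is nonnegative, so the absolute value row sum over row $j$ equals $\frac{1}{\sqrt{\tilde{d}_j}}\sum_{i} \frac{\tilde{A}[j,i]}{\sqrt{\tilde{d}_i}}$.

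The key step is to bound each $\tilde{d}_i$ appearing in this sum from below by $\deg^{\min} + 1$, the minimum augmented degree over all nodes of the graph (note the lemma statement uses $\deg^{\min}$ without a class subscript, i.e. the global minimum). This gives $\sum_i \frac{\tilde{A}[j,i]}{\sqrt{\tilde{d}_i}} \leq \frac{1}{\sqrt{\deg^{\min}+1}} \sum_i \tilde{A}[j,i] = \frac{\tilde{d}_j}{\sqrt{\deg^{\min}+1}}$. Substituting back, row $j$ has absolute row sum at most $\frac{1}{\sqrt{\tilde{d}_j}} \cdot \frac{\tilde{d}_j}{\sqrt{\deg^{\min}+1}} = \sqrt{\frac{\tilde{d}_j}{\deg^{\min}+1}} = \sqrt{\frac{\deg_j + 1}{\deg^{\min}+1}}$. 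Finally, maximizing over rows $j$ in the $k$-th class, $\deg_j \leq \deg^{\max}_k$, which yields the claimed bound $\sqrt{\frac{\deg^{\max}_k + 1}{\deg^{\min}+1}}$.

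I do not expect a genuine obstacle here; the argument is a short chain of elementary inequalities once the entrywise nonnegativity of $\tilde{L}$ is used to drop absolute values. The only points requiring mild care are (i) confirming the normalization convention matches the paper's definition $\tilde{L} = \tilde{D}^{-1/2}\tilde{A}\tilde{D}^{-1/2}$ with $\tilde{A} = I + A$, so that self-loops contribute and no node has zero augmented degree (hence all $\tilde{d}_i \geq 1$ and the division is well-defined), and (ii) being explicit that the lower bound $\deg^{\min}+1$ is over \emph{all} nodes touched by row $j$, which in the worst case can be any node of the graph, so the global minimum degree is the right quantity rather than the class-$k$ minimum. If instead one wanted a bound purely in terms of class-$k$ quantities, the neighbors of a class-$k$ node need not lie in class $k$, so the global $\deg^{\min}$ genuinely cannot be improved without extra homophily assumptions — worth a one-line remark.
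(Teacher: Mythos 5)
Your proposal is correct and follows exactly the same route as the paper's proof: expand the absolute row sum of $\tilde{D}^{-1/2}\tilde{A}\tilde{D}^{-1/2}$ over rows in class $k$, lower-bound every neighbor's augmented degree by the global $\deg^{\min}+1$, and then use $\sum_i \tilde{A}[j,i]=\deg_j+1\leq \deg^{\max}_k+1$ to conclude. Your added remarks on why the global minimum degree (rather than the class-$k$ minimum) is the right quantity are a sensible clarification but do not change the argument.
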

\begin{proof}
Recall that $\tilde{A}=A+I$. We have,
\begin{equation}
\begin{split}
\norm{\tilde{L}}_{\infty}[k]&=
\norm{\tilde{D}^{-1/2}\tilde{A}\tilde{D}^{-1/2}}_{\infty}[k]\\
&=\max_{i\in[N_k]}\sum_{j=1}^N \frac{\tilde{A}_{ij}}{\sqrt{\deg_{i}+1}\sqrt{\deg_{j}+1}}\\
&\leq\frac{1}{\sqrt{\deg^{\min}+1}} \max_{i\in[N_k]}\sum_{j=1}^{N} \frac{\tilde{A}_{ij}}{\sqrt{\deg_{i}+1}}\\
&\leq  \sqrt{\frac{\deg^{\max}_k+1}{\deg^{\min}+1}}.
\end{split}
\end{equation}
where $N_k$ is the number nodes in $k$-th class.
\end{proof}
\begin{repproposition}{prop: general bound}[\textbf{Restated}]
    Let $Q_i=(\frac{1}{u_i}+\frac{1}{m_i})$, $S_i=\frac{m_i+u_i}{(m_i+u_i-1/2)(1-1/(2\max(m_i,u_i)))}$  for $i=1,2$. Then, with probability at least $(1-\delta)$ over the choice of the training set from nodes of graphs, for all $h_\theta\in\mathcal{H}$,
    \begin{equation*}
    \begin{split}
         R(\mbZ_u,h_\theta)\leq \sum_{i=1}^2 R_{\gamma}(\mbZ_{m_i},h_{\theta})+\frac{u_i}{u}\Big(\frac{1}{\gamma}\mathfrak{R}_{m_i+u_i}(\mathcal{H})
        +c_0 Q_i\sqrt{\min(m_i,u_i)}+\sqrt{\frac{S_iQ_i}{2}\log(1/\delta)}\Big),
    \end{split}
    \end{equation*}
    where $\mathfrak{R}_{m_i+u_i}(\mathcal{H})$ is the transductive Rademacher complexity for $i$-th class, $R_{\gamma}(\mbZ_{m_i},h_{\theta})$ is the empirical risk based on $\gamma$-margin loss for $i$-th class  and $c_0=\sqrt{\frac{32\ln(4e)}{3}}$.
\end{repproposition}
\begin{proof}
    Let's decompose the true risk as follows,
    \begin{equation}\label{eq: v1}
        R(\mbZ_u,h_\theta)=\sum_{i=1}^2\frac{u_i}{u} R(\mbZ_{u_i},h_\theta),
    \end{equation}
    where $R(\mbZ_{u_i},h_\theta)$ is true risk of $i$-th class. Inspired by \citep[Corollary~1]{el2009transductive}, we have,
     \begin{equation}\label{eq: v2}
     \begin{split}
           R(\mbZ_{u_i},h_\theta)\leq R_{\gamma}(\mbZ_{m_i},h_{\theta})+\frac{1}{\gamma}\mathfrak{R}_{m_i+u_i}(\mathcal{H})
        +c_0 Q_i\sqrt{\min(m_i,u_i)}+\sqrt{\frac{S_iQ_i}{2}\log(1/\delta)}.
     \end{split}
    \end{equation}
    Combining \eqref{eq: v2} with \eqref{eq: v1}, we have,
    \begin{equation}
        \begin{split}
            R(\mbZ_u,h_\theta)&=\sum_{i=1}^2\frac{u_i}{u} R(\mbZ_{u_i},h_\theta)\\
            &\leq \sum_{i=1}^2 \frac{u_i}{u} \Big(R_{\gamma}(\mbZ_{m_i},h_{\theta})+\frac{1}{\gamma}\mathfrak{R}_{m_i+u_i}(\mathcal{H})
        +c_0 Q_i\sqrt{\min(m_i,u_i)}+\sqrt{\frac{S_iQ_i}{2}\log(1/\delta)}\Big).
        \end{split}
    \end{equation}
    It completes the proof by considering $\frac{u_i}{u}\leq 1$.
\end{proof}

\begin{repproposition}{prop: TR RC bound}[\textbf{Restated}]
    Given Assumptions~\ref{Ass: Feature node bounded} , \ref{ass: bounded param} and \ref{ass: bounded activation function}, then the following upper bound holds on the transductive Rademacher complexity of node classification,
    \begin{equation*}
        \mathfrak{R}_{m_k+u_k}(\mathcal{H})\leq \frac{B_f (\sqrt{2\log(2) d}+1)\Pi_{j=1}^{d} U_F(i) \|\mathrm{G}_f(\mbA)[i]\|_{\infty}^{2(d-1)}}{\sqrt{m_k+u_k}}
    \end{equation*}
    where $\|\mathrm{G}_f(\mbA)[k]\|_{\infty}$ is the infinite norm of graph filter among $k$-th class.
\end{repproposition}
\begin{proof}
      Fix $\lambda>0$, to be chosen later.
      Let us denote the output of $q$-th layer of GCN as, 
      \[N_{\theta_q}(\mathbf{X}_{u_k+m_k})=\sigma(\theta_{q-1}\mathrm{G}_f(\mbA)[i,:](\cdots\sigma(\theta_1\mathrm{G}_f(\mbA)[i,:]\mathbf{X}_{u_k+m_k}))).\]
      For the last layer, we do not consider aggregation.
      The Rademacher complexity can be 
    upper bounded as
    \begin{align*}
    \Big(\frac{m_k u_k}{m_k+u_k}\Big)\mathfrak{R}_{m_k+u_k}(\mathcal{H}) &= \E_{\boldsymbol{\epsilon}} 
    \sup_{N_{\theta_{d-1}},\theta_d} \sum_{i=1}^{m_k+u_k} 
    \epsilon_i \theta_d \sigma(\theta_{d-1}\mathrm{G}_f(\mbA)[i,:]N_{\theta_{d-1}}(\mathbf{X}_{u_k+m_k}))  \\
    &\leq \frac{1}{\lambda} \log \E_{\boldsymbol{\epsilon}} \sup 
    \exp\left( \lambda \sum_{i=1}^{m_k+u_k} 
    \epsilon_i \theta_d \sigma(\theta_{d-1}\mathrm{G}_f(\mbA)[i,:]N_{\theta_{d-1}}(\mathbf{X}_{u_k+m_k}))\right) \\
    &\leq \frac{1}{\lambda} \log \E_{\boldsymbol{\epsilon}} \sup \exp  
    \left( U_F(d)\cdot 
    \left\|\lambda \sum_{i=1}^{m_k+u_k} \epsilon_i 
    \sigma\big(\theta_{d-1}\mathrm{G}_f(\mbA)[i,:]N_{\theta_{d-1}}(\mathbf{X}_{u_k+m_k})\big) \right\| \right).
    \end{align*}
    We write this last expression as
    \begin{align*}
    &\frac{1}{\lambda} \log \E_{\boldsymbol{\epsilon}} \sup_{f, 
        \|\theta_{d-1}\|_F \leq U_F(d-1)} 
    \exp  \left( U_F(d)\cdot \lambda \left\| \sum_{i=1}^{m_k+u_k} \epsilon_i 
    \sigma(\theta_{d-1} \mathrm{G}_f(\mbA)[i,:]N_{\theta_{d-1}}(\mathbf{X}_{u_k+m_k})) \right\| \right)\\
    &\leq \frac{1}{\lambda} \log \left( 2\cdot 
    \E_{\boldsymbol{\epsilon}} \sup_{f} \exp \left( 
    U_F(d)\cdot U_F(d-1) \cdot  \|\mathrm{G}_f(\mbA)\|_{\infty}^{2} \lambda \left\| \sum_{i=1}^{m_k+u_k} \epsilon_i 
    N_{\theta_{d-1}}(\mathbf{X}_{u_k+m_k}) \right\| \right) \right).
    \end{align*}
     Repeating the process, we arrive at
    \begin{align}
     \Big(\frac{m_k u_k}{m_k+u_k}\Big)\hat{\Rcal}_{m_i+u_i}(\Hcal) &\leq \frac{1}{\lambda} \log \left(2^d \cdot 
    \E_{\boldsymbol{\epsilon}} 
    \exp \left( M \lambda \left\| \sum_{i=1}^{m_k+u_k} \epsilon_i \mathbf{X}_{i} \right\| 
    \right) \right).
    \label{eq:softmax_bd}
    \end{align}
    where $V= \|\mathrm{G}_f(\mbA)\|_{\infty}^{2(d-1)} \prod_{j=1}^d U_F(j)$.
    Define a random variable
    $$Z = V\cdot\left\|\sum_{i=1}^{m}\epsilon_i \mathbf{X}_{i}\right\|,$$
    (random as a function of the random variables 
    $\epsilon_1,\ldots,\epsilon_m$). Then
    \begin{align}
    \label{eq:softmax_bd2}
    \frac{1}{\lambda} \log \left\{ 2^d \cdot \E \exp \lambda Z \right\} 
    &= \frac{d\log(2)}{\lambda} +  \frac{1}{\lambda} \log \left\{ \E 
    \exp 
    \lambda (Z-\E Z) \right\} + \E Z.
    \end{align}
    By Jensen's inequality, $\E[Z]$ can be upper bounded by using the Rademacher complexity random variables
    \[
    V\sqrt{\E_{\boldsymbol{\epsilon}}\left[\left\|\sum_{i=1}^{m}\epsilon_i\mathbf{X}_{i}\right\|^2\right]}
    ~=~
    V\sqrt{\E_{\boldsymbol{\epsilon}}\left[\sum_{i,i'=1}^{m}\epsilon_i\epsilon_{i'}\mathbf{X}_{i}^\top\bx_{i'}\right]}
    ~=~
    2Vp\sqrt{\sum_{i=1}^{m}\norm{\mathbf{X}_{i}}^2},
    \]
    where $p=\frac{m_k u_k}{(m_k+u_k)^2}$. To handle the $ \log \left\{ \E 
    \exp 
    \lambda (Z-\E Z) \right\}$ term in \eqref{eq:softmax_bd2}, note that 
    $Z$ is 
    a deterministic function of the i.i.d. random variables 
    $\epsilon_1,\ldots,\epsilon_m$, and satisfies
    \begin{align*}
    Z(\epsilon_1,\ldots,\epsilon_i, \ldots,\epsilon_m) - 
    Z(\epsilon_1,\ldots,-\epsilon_i, \ldots,\epsilon_m) \leq 2V \|\mathbf{X}_{i}\|~.
    \end{align*}
    This means that $Z$ satisfies a bounded-difference condition, which by 
    the 
    proof of Theorem 6.2 in \citep{boucheron2013concentration}, implies 
    that 
    $Z$ is sub-Gaussian, with variance factor
    $$v = \frac{1}{4} \sum_{i=1}^{m_k+u_k} (2V\|\mathbf{X}_{i}\|)^2 = 4 p^2 V^2\sum_{i=1}^{m_k+u_k} 
    \|\mathbf{X}_{i}\|^2,$$
    and satisfies
    $$ \frac{1}{\lambda} \log \left\{ \E \exp \lambda (Z-\E Z) \right\} 
    \leq 
    \frac{1}{\lambda} \frac{\lambda^2 4 V^2 p^2 \sum_{i=1}^{m_k+u_k} \|\mathbf{X}_{i}\|^2}{2} =  
    \frac{\lambda 4 V^2 p^2  \sum_{i=1}^{m_k+u_k} \|\mathbf{X}_{i}\|^2}{2}.$$
    Choosing $\lambda = \frac{\sqrt{2\log(2)d}}{2pV \sqrt{\sum_{i=1}^{m_k+u_k} \| 
    \mathbf{X}_{i} 
            \|^2}}$ 
    and using the above, we get that \eqref{eq:softmax_bd} can be upper 
    bounded 
    as follows:
    \begin{align*}
    \frac{1}{\lambda} \log \left\{ 2^d \cdot \E \exp \lambda Z \right\} 
    &\leq \E Z + \sqrt{2\log(2)d} \cdot 2p V\sqrt{\sum_{i=1}^{m_k+u_k} \|\mathbf{X}_{i}\|^2} 
    \\&\leq 2pV 
    \left(\sqrt{2\log(2)d}+1\right)\sqrt{\sum_{i=1}^{m_k+u_k} \|\mathbf{X}_{i}\|^2}\\
     &\leq2pV 
    \left(\sqrt{2\log(2)d}+1\right)B_f\sqrt{m_k+u_k }\\
    &= 2pV 
    \left(\sqrt{2\log(2)d}+1\right)B_f\sqrt{m_k+u_k }
    ~,
    \end{align*}
    from which the result follows.
\end{proof}
\begin{reptheorem}{thm: main result}[\textbf{Restated}]
     Under the same Assumptions in Proposition~\ref{prop: general bound} and Proposition~\ref{prop: TR RC bound}, the following upper bound holds on population risk for imbalance transductive node classification under a GCN model,
     \begin{equation*}
    \begin{split}
         R(\mbZ_u,h_\theta)&\leq \sum_{i=1}^2 R_{\gamma}(\mbZ_{m_i},h_{\theta})+\frac{u_i}{u}\Big(\frac{B_f (\sqrt{2\log(2) d}+1)\Pi_{j=1}^{d} U_F(j) }{\gamma\sqrt{(m_i+u_i)}}\times \Bigg(\frac{\degmax_i+1}{\deg^{\min}+1}\Bigg)^{d-1}
        \\&\qquad +c_0 Q_i\sqrt{\min(m_i,u_i)}+\sqrt{\frac{S_iQ_i}{2}\log(1/\delta)}\Big),
    \end{split}
    \end{equation*}
where $U_F(i)$ is the maximum of Frobenius norm of $i$-th layer parameter. 
\end{reptheorem}
\begin{proof}
    The proof follows directly from combining Proposition~\ref{prop: general bound} with Proposition~\ref{prop: TR RC bound} and substituting $\|\mathrm{G}_f(\mbA)\|_{\infty}$ for the GCN model using Lemma~\ref{lem: bound on inf norm of L}.
\end{proof}
\clearpage
% \todo{pls update this pseudo-algorithm.}\todot{Done}
\section{UPL Algorithm Details}\label{app: UPL alg detail}
\begin{algorithm}[H]
\caption{Uncertainty-aware Pseudo-labeling  (UPL) Algorithm }
\label{alg1}
 % \scriptsize
\begin{scriptsize} % Choose the size here
\begin{algorithmic}[1] % The [1] ensures that lines are numbered
\STATE \textbf{Input:} Graph $G(X,V,E,Y)$, set of known train labels $\mathcal{Y}_{t}$, Number of classes $C$, set of edges $\mathcal{E}$, Number of Iterations $k$
\STATE \textbf{Output:} Model parameters
\STATE Initialize empty pseudo labels set $\mathcal{Y}_{p}$
% \STATE Initialize train set $D_{train}$ with known labels
\FOR{ $1$ to $k$}
    \STATE Initialize new model
    \STATE Train  $model$ on $\mathcal{Y}_{t}\cup \mathcal{Y}_{p}$ utilizing BS
    \STATE Initialize results set $R$ with known labels
    \FOR{$i \leftarrow 1 $ to $ t$}
        \STATE $G_{\mathrm{Aug}} \leftarrow $ Selective Edge Removal on $G$ for $S_k$ times
        \STATE  $R_i \leftarrow $ \{Inference Model on $G_{\mathrm{Aug}}$\} \% Add output to $R$
    \ENDFOR
    % \FOR{$v$ in $G_{\mathrm{Aug}}(V)$}
    %     \FOR{$c \leftarrow$ 1 to $C$}
    %         \STATE $\mathrm{mean}=\frac{\sum R_{k,v,c}}{t}$
    %         \STATE $M_{v,c} =\sqrt{\sum\frac{(R_{k,v,c}-\mathrm{mean})^2}{t}}$
    %     \ENDFOR
    % \ENDFOR
    % \STATE $M_{uncertinty} \leftarrow STD\{R\}$ \% Standard Deviation of R
    \STATE $P \leftarrow $ \{Inference Model on $G$\} \% probs is the class-wise probability
    \STATE Initialize the set $\mathcal{Y}^{\mathrm{new}}$ for new pseudo labels
    \FOR{$v$ in $G(V)$}
        \IF{$v$ not in $\mathcal{Y}_{t}$}    
            \IF{$\eta_l<\max(P_k^v)<\eta_u$}
                \IF{$U(v)<Q_\alpha$}
                    \STATE Add $v$ and $\mathrm{argmax}(P_k^v)$ to $\mathcal{Y}^{\mathrm{new}}$ as node and pseudo label
                \ENDIF
            \ENDIF
        \ENDIF
        % \STATE $M^{probs}_v \leftarrow max(P_k^v)$
        % \STATE $M^{preds}_v \leftarrow argmax(P_k^v)$
        % \STATE $M^{uncertainty}_v \leftarrow M_{v,M^{preds}_v}$
    \ENDFOR
    % \STATE $preds \leftarrow argmax\{probs\}$ \% Class wise argmax for each node
    % \STATE $V_{uncertinty} \leftarrow M_{uncertinty}[preds]$
    % \STATE Apply pseudo-labeling threshold on $probs[preds]$ and make $mask_1$
    % \STATE Apply uncertainty threshold on $M_{uncertinty}[preds]$ and make $mask_2$
    % \STATE $mask_{pseduo} \leftarrow$ Intersection of $mask_1$ and $mask_2$
    \STATE $\mathcal{Y}^{p}  \leftarrow \mathcal{Y}^{\mathrm{new}}$    
\ENDFOR
\STATE \textbf{return}  $best\_model$ parameters
\end{algorithmic}
\end{scriptsize}
\end{algorithm}

\section{Experiment Details}\label{app: Exp details}

% \subsection{Experiment Details}

\textbf{Datasets:} We summarized the dataset statistics in Table \ref{tab:datasets}.  Note that in the Amazon Computers and Amazon Photos datasets \cite{shchur2019pitfallsgraphneuralnetwork}, nodes represent products and edges denote co-purchasing relationships between them.
 
\begin{table}[htp]
\centering
\caption{\label{tab:datasets}Dataset statistics.}
\begin{tabular}{l c r r r r}
\toprule
\textbf{Dataset} & \textbf{Nodes} & \textbf{Edges} & \textbf{Classes} & \textbf{Features}   \\
\midrule
CiteSeer &  3,327 & 4,732 & $6$ & 3,703  \\
Cora &  2,708 & 5,429 & $7$ & 1,433  \\
Pubmed &  19,717 & 44,338 & $3$ & 500  \\
Chameleon &  2277 & 36101 & 5 & 2325\\
Squirrel &  5201 & 217073 & 5 & 2089\\ 
Wisconsin & 251 & 515 & 5 &1703\\
Computers & 13,381 & 245,778 & 10 & 767\\
Photo & 7,650 & 119,081 & 8 & 745\\
\bottomrule
\end{tabular}
\end{table}

\textbf{Hyperparameters:} To attain the best results we have fine-tuned three hyperparameters for our model. These hyperparameters and their tune range along with hyperparameters from some of the previous have been displayed in Table \ref{tb:hyperparameters}.
% \todoa{Add the table of hyper-parameter for all methods.} \todot{Done}
\begin{table}[h]
\centering
\caption{Tuned hyperparameters used in our work compared with those from previous methods}
\label{tb:hyperparameters}
\vspace{-0.5em}
\resizebox{0.4\columnwidth}{!}{\begin{tabular}{c|c}
\toprule
\textbf{Methods} & \textbf{Hyperparameters Range} \\ \cline{1-2}
GraphENS & \makecell{$\lambda \in \beta(2,2)$ \\
                     $k \in \{1,5,10\}$ \\
                     $\tau \in \{1,2\}$ \\
                     warm up $\in \{1,5\}$} \\ \cline{1-2}
TAM & \makecell{$\alpha \in \{0.25,0.5,1.5,2.5\}$ \\
                $\beta  \in \{0.125,0.25,0.5\}$ \\
                $\phi   \in \{0.8,1.2\} $ } \\ \cline{1-2}
Unreal & \makecell{$K \in \{100,300,500,700,900\}$ \\
                   $T \in \{40,50,60,70,80,100\}$  \\
                   $\alpha \in \{4,6,8,10\}$ \\
                   $p \in \{0.5,0.75,0.98\}$ \\
                   DGIN threshold $\in \{0.25,0.5,0.75,1.00\}$} \\ \cline{1-2}
UPL & \makecell{$\eta_l \in \{0.25 + 0.05n \mid n \leq 9, n\in \mathbb{N} \cup \{0\}\}$ \\ 
                $\eta_u \in \{min(\eta_l + 0.25n,1.00) \mid n\leq3, n \in \mathbb{N}\}$ \\
                $Q_{\alpha} \in \{Q_{0.7},Q_{0.8},Q_{0.9}\}$
                    } \\

\bottomrule
\end{tabular}}
\end{table}
\section{Experiments Setup}\label{app:Exp_set}
% \subsection{Hyperparameters:}
UPL contains two tunable hyperparameters used for the pseudo labels threshold and three fixed ones which have approximated using a few experiments. Other extra hyperparameters have been adjusted according to \citep{song2022tam}. We will discuss the setting for these parameters in the following sections.
\subsection{UPL}\label{app:UPL}
We employ two thresholds for pseudo-labeling. As mentioned in Table \ref{tb:hyperparameters}, $\eta_l$ and $\eta_u$ are chosen respectively from $\{0.25 + 0.05n \mid 0\leq n \leq 8, n\in \mathbb{N}\}$ and $\eta_u \in \{min(\eta_l + 0.25n,1) \mid n\leq3, n \in \mathbb{N}\}$. Both $S_k$ and $t$ are fixed to 100, and the quantile for uncertainty-based node selection is chosen from $\{Q_{0.7},Q_{0.8},Q_{0.9}\}$.
\subsection{BalancedSoftmax}\label{app:BalancedSoftmax}
Another part of UPL is BalancedSoftmax (BS) presented in \citep{ren2020balanced}. We have used the same values found in \citep{song2022tam} to prevent increasing the number of hyperparameters in the UPL algorithm.
\subsection{Other Details}\label{app:Other_details}
More minor details concerning the experiments have been explained in the following sections.

\textbf{Imbalance Ratio:} The distribution of training sets for each dataset can be seen in Table \ref{tb:datastat}. All datasets follow a 10:1 imbalance ratio, except for Wisconsin, which has an imbalance ratio ($\rho$) of 11.63. By analyzing Table \ref{tb:datastat}, it is evident from the results in Tables \ref{tb:apendix_homo} and \ref{tb:apendix_hetero} that our model generally performs significantly better when the node with the maximum degree is present in one of the majority classes.

% \todoa{It would be great to add a discussion in which dataset we have better performance and how maximum degree is distrbuted among majority and minority?}\todot{Done}
% \textbf{Target class distribution:} 
% For each dataset, 
% % the computation of $P_{k}$ for the regularization term involves two steps. Firstly,
% we calculate the ratio of each class within all six datasets, as detailed in Table~\ref{tb:datastat}.
\begin{table}[h] %{r}{4.3cm}
\center
\caption{\small Number of training nodes per class}
\begin{small}
\setlength{\tabcolsep}{3pt} 
\begin{tabular}{l|cccccccccc}
\toprule
    \textbf{Dataset} ($\rho=10$) & $\mathbf{C}_0$ & $\mathbf{C}_1$ & $\mathbf{C}_2$ & $\mathbf{C}_3$ & $\mathbf{C}_4$ & $\mathbf{C}_5$ & $\mathbf{C}_6$ & $\mathbf{C}_7$ & $\mathbf{C}_8$ & $\mathbf{C}_9$ \\
    \hline
    Cora  & 20 & 20 & 20 & 20 & 2 & 2 & 2 & - & - & - \\    
    CiteSeer & 20 & 20 & 20 & 2 & 2 & 2 & - & - & - & - \\
    PubMed & 20 & 20 & 2 &- &- &- &- & - & - & - \\
    Chameleon & 225 & 220 & 218 & 22 & 22 & - & - & - & - & - \\
    Squirrel & 487 & 494 & 501 & 50 & 50 & - & - & - & - & - \\ 
    Wisconsin ($\rho=11.63$) & 4 & 38 & 50 & 5 & 5 & - & - & - & - & - \\ 
    Computers & 20 & 20 & 20 & 20 & 20 & 2 & 2 & 2 & 2 & 2 \\ 
    Photo & 20 & 20 & 20 & 20 & 2 & 2 & 2 & 2 & - & - \\ 
\bottomrule
\end{tabular}
\end{small}
\label{tb:datastat}
\end{table}

\begin{table}[h] %{r}{4.3cm}
\center
\caption{\small Size of maximum degree node per class}
\begin{small}
\setlength{\tabcolsep}{3pt} 
\begin{tabular}{l|ccccccc}
\toprule
    \textbf{Dataset}  & $\mathbf{C}_0$ & $\mathbf{C}_1$ & $\mathbf{C}_2$ & $\mathbf{C}_3$ & $\mathbf{C}_4$ & $\mathbf{C}_5$ & $\mathbf{C}_6$ \\
    \hline

    Cora  & 36 & 78 & 168 & 74 & 40 & 23 & 31 \\    
    CiteSeer & 15 & 21 & 99 & 18 & 51 & 17 &-  \\
    PubMed & 80 & 171 & 154 &- &- &- &- \\
    Chameleon & 207 & 269 & 322 & 531 & 732 & - & - \\
    Squirrel & 1041 & 1054 & 1081 & 1060 & 1904 & - & -  \\ 
    Wisconsin & 5 & 122 & 10 & 13 & 11 & - & -  \\ 
\bottomrule
\end{tabular}
\end{small}
\label{tb:datastat_degree}
\end{table}
% \todot{Clean the table}
% Subsequently, we derive a normalized inverse ratio for each specific dataset for each class. The inversion is intended to emphasize the significance of the minority class, and normalization is applied to the output to transform it into a probability distribution.

\paragraph{Hardware and setup:} All of our Experiments were conducted on a single server containing an Nvidia RTX 3090 GPU, AMD Ryzen 5 3600x CPU @ 3.80GHz, and 48GB of RAM. 

% \paragraph{Code:} Anonymized code is provided at \url{.....}. 

\paragraph{Training:} To choose the training mask, we use the training masks provided by Pytorch Geometric. We train each model for 1000 epochs with an early stopping patience of 100. For each dataset $\eta_l$ is chosen from $(0.25,0.90)$ and the value of $\eta_u$ is chosen from $(\eta_u,1.00)$.
% Based on observations on the Cora dataset, we have chosen a value for each hyperparameter and applied the same setting for all datasets. The hyperparameter values are as follows: $\lambda=0.1$, which is the regularization coefficient, $\eta_l=0.3$, and $\eta_u=0.6$, which are the lower and upper thresholds, respectively.
We also perform each experiment with 10 iterations of pseudo-labeling.

\paragraph{Optimizer:} 
% Considering the regularization terms,
we use PyTorch Adaptive Moment Estimation (ADAM) as the optimizer. Also, $\ell_2$ regularization with weight decay of $5e^{-4}$ for the first layer and dropout in some layers are used to prevent over-fitting. We use a learning rate of 0.01.

\section{More Experiments}\label{App: more exp}

In this section, we present the performance outcomes for three prominent GNN architectures: Graph Convolutional Network (GCN), Graph Attention Network (GAT), and Graph SAGE (SAGE). These models were evaluated across six diverse datasets with three different imbalance ratios to assess their effectiveness in various scenarios. The results for the imbalance ratio of 10 are detailed in Table \ref{tb:apendix_homo} and Table \ref{tb:apendix_hetero}.

Tables \ref{tb:Homo_ir_5} and \ref{tb:Hetero_ir_5} present the performance evaluation of UPL with an imbalance ratio of 5, while Tables \ref{tb:Homo_ir_2} and \ref{tb:Hetero_ir_2} show the corresponding results for a lower imbalance ratio of $\rho=2$.
\begin{table}[ht!]
\caption{\small Experimental results of our regularization methods and other baselines on three class-imbalanced node classification benchmark datasets: Chameleon, Squirrel and Wisconsin. We report averaged balanced accuracy (bAcc.) and F1-score with the standard errors for 20 repetitions on three representative GNN architectures, i.e., GCN, GAT and Sage.}
\begin{center}
\begin{scriptsize}
\setlength{\columnsep}{1pt}%
\resizebox{\linewidth}{!}{
\begin{tabular}{@{\extracolsep{1pt}}rlcc|cc|cc@{}}
\toprule 
 & \multirow{1}{*}{\textbf{Dataset}} & \multicolumn{2}{c}{Cora} & \multicolumn{2}{c}{CiteSeer} & \multicolumn{2}{c}{PubMed}  \\ 
\cline{2-8}\rule{0pt}{2.2ex}
& \textbf{Imbalance Ratio ($\rho=10$)} & bAcc. & F1 & bAcc. & F1 & bAcc. & F1 \\
\cline{2-8}
\rule{0pt}{2.5ex}  
\multirow{8}{*}{\rotatebox{90}{GCN}} 
                    \rule{0pt}{2ex}
                    % 0
                     & Vanilla 
                     & 53.58 \tiny{$\pm 0.70$} & 48.52 \tiny{$\pm 1.09$}
                     & 35.29 \tiny{$\pm 0.34$} & 22.62 \tiny{$\pm 0.39$}
                     & 62.22 \tiny{$\pm 0.35$} & 48.99 \tiny{$\pm 1.02$} \\

                     & Re-Weight
                     & 65.52 \tiny{$\pm 0.84$}& 65.54 \tiny{$\pm 1.20$}
                     & 44.52 \tiny{$\pm 1.22$}& 38.85 \tiny{$\pm 1.62$}
                     & 70.17 \tiny{$\pm 1.25$}& 66.37 \tiny{$\pm 1.73$}
                     \\
                    
                     & PC Softmax 
                     & 67.79 \tiny{$\pm 0.92$}& 67.39 \tiny{$\pm 1.08$}
                     & 49.81 \tiny{$\pm 1.12$}& 45.55 \tiny{$\pm 1.26$}
                     & 70.20 \tiny{$\pm 0.60$}& 68.83 \tiny{$\pm 0.73$}
                     \\
                     
                     & DR-GCN 
                     & 60.17 \tiny{$\pm 0.83$}& 59.31 \tiny{$\pm 0.97$}
                     & 42.64 \tiny{$\pm 0.75$}& 38.22 \tiny{$\pm 1.22$}
                     & 65.51 \tiny{$\pm 0.81$}& 64.95 \tiny{$\pm 0.53$}
                     \\
                     & GraphSMOTE 
                     & 62.66 \tiny{$\pm 0.83$} & 61.76 \tiny{$\pm 0.96$}
                     & 34.26 \tiny{$\pm 0.89$} & 28.31 \tiny{$\pm 1.48$}
                     & 68.94 \tiny{$\pm 0.89$} & 64.17 \tiny{$\pm 1.43$}\\

                     % 2
                     & BalancedSoftmax 
                     & 67.75 \tiny{$\pm 1.09$} & 66.47 \tiny{$\pm 1.27$}
                     & 52.09 \tiny{$\pm 1.61$} & 48.55 \tiny{$\pm 2.01$}
                     & 71.78 \tiny{$\pm 0.82$} & 71.18 \tiny{$\pm 0.94$} \\

                     % 1
                     & GraphENS + TAM 
                     & 69.42 \tiny{$\pm 0.90$} & 68.70 \tiny{$\pm 1.08$}
                     & 53.10 \tiny{$\pm 1.82$} & 50.67 \tiny{$\pm 2.28$}
                     & 71.70 \tiny{$\pm 0.95$} & 69.01 \tiny{$\pm 1.72$} \\
                        
                     & Unreal 
                     & \textbf{78.33} \tiny{$\pm 1.04$} & \textbf{76.44} \tiny{$\pm 1.06$}
                     & \underline{65.63} \tiny{$\pm 1.38$} & \underline{64.94} \tiny{$\pm 1.38$}
                     & \underline{75.35} \tiny{$\pm 1.41$} & \underline{73.65} \tiny{$\pm 1.43$} \\
                     
                     \cline{2-8}
                     
                     % 3
                     & UPL
                     & \underline{76.16} \tiny{$\pm 0.76$} & \underline{74.53} \tiny{$\pm 0.85$}
                    & \textbf{68.18} \tiny{$\pm 0.27$} & \textbf{67.78} \tiny{$\pm 0.25$}
                    & \textbf{77.50} \tiny{$\pm 0.44$} & \textbf{77.19} \tiny{$\pm 0.43$} \\
\cline{2-8}
\noalign{\vskip\doublerulesep
         \vskip-\arrayrulewidth} \cline{2-8}
\rule{0pt}{2.5ex}  
\multirow{8}{*}{\rotatebox{90}{GAT}}
                    % 0
                     & Vanilla 
                     & 49.47 \tiny{$\pm 0.74$} & 45.14 \tiny{$\pm 1.04$}
                     & 33.67 \tiny{$\pm 0.27$} & 21.70 \tiny{$\pm 0.25$}
                     & 59.98 \tiny{$\pm 0.33$} & 47.05 \tiny{$\pm 0.81$} \\
                     
                     & Re-Weight 
                     & 66.72 \tiny{$\pm 0.80$}& 66.52 \tiny{$\pm 1.06$}
                     & 45.59 \tiny{$\pm 1.73$}& 39.43 \tiny{$\pm 2.03$}
                     & 69.13 \tiny{$\pm 1.25$}& 64.81 \tiny{$\pm 1.70$} \\
                    
                     & PC Softmax 
                     & 67.02 \tiny{$\pm 0.65$}& 66.57 \tiny{$\pm 0.89$}
                     & 50.70 \tiny{$\pm 1.73$}& 47.14 \tiny{$\pm 1.85$}
                     & 72.20 \tiny{$\pm 0.49$}& 70.95 \tiny{$\pm 0.82$} \\
                     
                     & DR-GCN 
                     & 59.30 \tiny{$\pm 0.76$}& 57.79 \tiny{$\pm 1.03$}
                     & 44.04 \tiny{$\pm 1.26$}& 39.44 \tiny{$\pm 1.76$}
                     & 69.56 \tiny{$\pm 1.01$}& 68.49 \tiny{$\pm 0.71$} \\
                     
                     & GraphSMOTE 
                     & 56.50 \tiny{$\pm 0.71$} & 54.27 \tiny{$\pm 1.09$}
                     & 44.94 \tiny{$\pm 1.36$} & 41.63 \tiny{$\pm 1.78$}
                     & 62.86 \tiny{$\pm 0.53$} & 53.00 \tiny{$\pm 1.17$}\\
                    % 2
                     & BalancedSoftmax
                     & 66.63 \tiny{$\pm 0.91$} & 65.66 \tiny{$\pm 0.98$}
                     & 52.51 \tiny{$\pm 1.28$} & 49.71 \tiny{$\pm 1.80$}
                     & 71.09 \tiny{$\pm 0.74$} & 68.95 \tiny{$\pm 0.97$} \\
                    % 1
                     & GraphENS + TAM
                     & 69.35 \tiny{$\pm 0.85$} & 68.20 \tiny{$\pm 0.94$}
                     & 53.50 \tiny{$\pm 1.21$} & 51.15 \tiny{$\pm 1.58$}
                     & 71.27 \tiny{$\pm 0.85$} & 68.43 \tiny{$\pm 1.65$} \\

                    & Unreal 
                     & \textbf{78.91} \tiny{$\pm 0.59$} & \textbf{75.99} \tiny{$\pm 0.47$}
                     & \underline{64.10} \tiny{$\pm 1.49$} & \underline{63.44} \tiny{$\pm 1.47$}
                     & \underline{74.68} \tiny{$\pm 1.43$} & \underline{72.78} \tiny{$\pm 0.89$} \\
                     \cline{2-8}
                     % 3
                     & UPL 
                     & \underline{76.28} \tiny{$\pm 0.75$} & \underline{75.55} \tiny{$\pm 0.79$}
                     & \textbf{66.44} \tiny{$\pm 0.59$} & \textbf{65.98} \tiny{$\pm 0.57$}
                     & \textbf{75.27} \tiny{$\pm 0.42$} & \textbf{74.43} \tiny{$\pm 0.61$} \\
\cline{2-8}
\noalign{\vskip\doublerulesep
         \vskip-\arrayrulewidth} \cline{2-8}
\rule{0pt}{2.5ex}  
\multirow{8}{*}{\rotatebox{90}{Sage}} 
                    & Vanilla 
                     & 50.82 \tiny{$\pm 0.47$} & 43.77 \tiny{$\pm 0.75$}
                     & 34.95 \tiny{$\pm 0.19$} & 22.54 \tiny{$\pm 0.24$}
                     & 61.63 \tiny{$\pm 0.63$} & 49.58 \tiny{$\pm 1.24$} \\

                     & Re-Weight
                     & 63.76 \tiny{$\pm 0.98$}& 63.46 \tiny{$\pm 1.22$}
                     & 46.64 \tiny{$\pm 1.92$}& 41.38 \tiny{$\pm 2.76$}
                     & 69.03 \tiny{$\pm 1.17$}& 64.01 \tiny{$\pm 2.18$}
                     \\
                    
                     & PC Softmax 
                     & 64.03 \tiny{$\pm 0.81$}& 63.73 \tiny{$\pm 0.99$}
                     & 50.14 \tiny{$\pm 1.89$}& 47.38 \tiny{$\pm 2.13$}
                     & 71.39 \tiny{$\pm 0.84$} & 70.25 \tiny{$\pm 1.02$}
                     \\
                     
                     & DR-GCN 
                     & 61.05 \tiny{$\pm 1.17$}& 60.17 \tiny{$\pm 1.23$}
                     & 46.00 \tiny{$\pm 0.93$}& 47.73 \tiny{$\pm 1.12$}
                     & 69.23 \tiny{$\pm 0.68$}& 67.35 \tiny{$\pm 0.90$}
                     \\
                     
                     & GraphSMOTE 
                     & 65.37 \tiny{$\pm 0.55$} & 65.13 \tiny{$\pm 0.68$}
                     & 38.94 \tiny{$\pm 1.05$} & 33.60 \tiny{$\pm 1.68$}
                     & 64.15 \tiny{$\pm 0.55$} & 54.00 \tiny{$\pm 1.14$}\\
                    % 2
                     & BalancedSoftmax
                     & 63.91 \tiny{$\pm 0.88$} & 63.01 \tiny{$\pm 1.00$}
                     & 49.45 \tiny{$\pm 1.65$} & 46.08 \tiny{$\pm 2.08$}
                     & 70.20 \tiny{$\pm 0.72$} & 69.60 \tiny{$\pm 0.86$} \\
                    % 1
                     & GraphENS + TAM
                     & 65.36 \tiny{$\pm 0.96$} & 64.63 \tiny{$\pm 1.16$}
                     & 50.37 \tiny{$\pm 1.62$} & 47.41 \tiny{$\pm 1.99$}
                     & 70.48 \tiny{$\pm 0.87$} & 68.65 \tiny{$\pm 1.61$} \\
                    
                    & Unreal 
                     & \textbf{75.99} \tiny{$\pm 0.98$} & \underline{73.63} \tiny{$\pm 1.23$}
                     & \underline{66.45} \tiny{$\pm 0.39$} & \underline{65.83} \tiny{$\pm 0.30$}
                     & \underline{74.78} \tiny{$\pm 1.30$} & \underline{72.80} \tiny{$\pm 0.54$} \\
                     \cline{2-8}
                    % 3 
                     & UPL 
                     & \underline{75.06} \tiny{$\pm 0.84$} & \textbf{73.93} \tiny{$\pm 0.86$}
                     & \textbf{67.41} \tiny{$\pm 0.51$} & \textbf{67.06} \tiny{$\pm 0.51$}
                     & \textbf{74.84} \tiny{$\pm 0.42$} & \textbf{74.26} \tiny{$\pm 0.54$} \\

\bottomrule

\end{tabular}
}

\end{scriptsize}
\end{center}
\label{tb:apendix_homo}
\vspace{-0.1in}
\end{table}
\begin{table}[ht!]
\caption{\small Experimental results of our UPL algorithm and other baselines on three class-imbalanced node classification benchmark datasets: Chameleon, Squirrel and Wisconsin. We report averaged balanced accuracy (bAcc.) and F1-score with the standard errors for 20 repetitions on three representative GNN architectures, i.e., GCN, GAT and Sage}
\begin{center}
\begin{scriptsize}
\setlength{\columnsep}{1pt}%
\resizebox{\linewidth}{!}{
\begin{tabular}{@{\extracolsep{1pt}}rlcc|cc|cc@{}}
\toprule 
 & \multirow{1}{*}{\textbf{Dataset}} & \multicolumn{2}{c}{Chameleon} & \multicolumn{2}{c}{Squirrel} & \multicolumn{2}{c}{Wisconsin \tiny{($\rho=11.63$)}}  \\ 
\cline{2-8}\rule{0pt}{2.2ex}
& \textbf{Imbalance Ratio ($\rho=10$)} & bAcc. & F1 & bAcc. & F1 & bAcc. & F1 \\
\cline{2-8}
\rule{0pt}{2.5ex}  
\multirow{8}{*}{\rotatebox{90}{GCN}}

                    % 0
                     & Vanilla 
                     & 57.01 \tiny{$\pm 0.53$} & 55.61 \tiny{$\pm 0.67$}
                     & 38.07 \tiny{$\pm 0.57$} & 34.81 \tiny{$\pm 0.69$}
                     & 34.51 \tiny{$\pm 2.31$} & 31.45 \tiny{$\pm 2.66$} \\
                     
                     & Re-Weight
                     & 36.07 \tiny{$\pm 0.87$}& 35.61 \tiny{$\pm 0.81$}
                     & 26.92 \tiny{$\pm 0.53$}& 25.04 \tiny{$\pm 0.59$}
                     & 44.13 \tiny{$\pm 3.08$}& 40.74 \tiny{$\pm 3.27$} \\
                    
                     & PC Softmax 
                     & 36.86 \tiny{$\pm 1.04$}& 36.24 \tiny{$\pm 1.01$}
                     & 26.49 \tiny{$\pm 0.59$}& 25.73 \tiny{$\pm 0.49$}
                     & 30.90 \tiny{$\pm 3.10$}& 28.15 \tiny{$\pm 2.16$} \\
                     
                     & DR-GCN 
                     & 33.34 \tiny{$\pm 0.81$}& 29.60 \tiny{$\pm 0.79$}
                     & 23.34 \tiny{$\pm 0.43$}& 18.20 \tiny{$\pm 0.49$}
                     & 29.44 \tiny{$\pm 1.36$}& 27.08 \tiny{$\pm 1.37$} \\
                     
                     & GraphSMOTE 
                     & 52.38 \tiny{$\pm 0.77$}& 50.03 \tiny{$\pm 0.76$}
                     & 41.30 \tiny{$\pm 0.47$}& 39.07 \tiny{$\pm 0.60$}
                     & 45.36 \tiny{$\pm 4.21$}& 40.91 \tiny{$\pm 4.39$} \\
                    % 2
                     & BalancedSoftmax
                     & \underline{58.60} \tiny{$\pm 1.06$} & \underline{58.20} \tiny{$\pm 1.00$}
                     & 41.81 \tiny{$\pm 0.64$} & 41.27 \tiny{$\pm 0.71$}
                     & \underline{44.44} \tiny{$\pm 3.46$} & \underline{41.29} \tiny{$\pm 2.85$}\\
                    % 1
                     & GraphENS + TAM
                     & 40.98 \tiny{$\pm 0.85$} & 38.93 \tiny{$\pm 0.84$}
                     & 27.98 \tiny{$\pm 0.90$} & 24.04 \tiny{$\pm 0.70$}
                     & 41.38 \tiny{$\pm 2.78$} & 34.56 \tiny{$\pm 2.22$}\\

                     & Unreal
                     & 58.18 \tiny{$\pm 0.94$} & 56.98 \tiny{$\pm 1.03$}
                     & \textbf{45.65} \tiny{$\pm 1.08$} & \textbf{44.78} \tiny{$\pm 1.07$}
                     & 41.26 \tiny{$\pm 5.19$} & 32.14 \tiny{$\pm 2.15$}\\

                     \cline{2-8}
                     % 3
                     & UPL 
                     & \textbf{61.19} \tiny{$\pm 0.66$} & \textbf{60.71} \tiny{$\pm 0.71$}
                     & \underline{43.34} \tiny{$\pm 0.25$} & \underline{42.82} \tiny{$\pm 0.07$}
                     & \textbf{48.70} \tiny{$\pm 2.88$} & \textbf{42.72} \tiny{$\pm 2.38$} \\
\cline{2-8}
\noalign{\vskip\doublerulesep
         \vskip-\arrayrulewidth} \cline{2-8}
\rule{0pt}{2.5ex}  
\multirow{8}{*}{\rotatebox{90}{GAT}}
                    % 0
                     & Vanilla 
                     & 48.17 \tiny{$\pm 0.91$} & 44.01 \tiny{$\pm 1.48$}
                     & 29.12 \tiny{$\pm 0.68$} & 22.05 \tiny{$\pm 0.58$}
                     & 33.25 \tiny{$\pm 1.83$} & 31.50 \tiny{$\pm 2.29$} \\
                     
                     & Re-Weight 
                     & 35.72 \tiny{$\pm 0.65$}& 34.19 \tiny{$\pm 0.74$}
                     & 25.79 \tiny{$\pm 0.52$}& 24.32 \tiny{$\pm 0.62$}
                     & 42.15 \tiny{$\pm 2.33$}& 37.66 \tiny{$\pm 2.27$}
                     \\
                     & PC Softmax 
                     & 38.32 \tiny{$\pm 0.88$}& 37.46 \tiny{$\pm 0.84$}
                     & 26.52 \tiny{$\pm 0.31$}& 25.71 \tiny{$\pm 0.44$}
                     & 41.89 \tiny{$\pm 3.95$}& 38.03 \tiny{$\pm 3.35$}
                     \\
    
                     & DR-GCN 
                     & 34.84 \tiny{$\pm 0.72$}& 31.53 \tiny{$\pm 0.86$}
                     & 24.69 \tiny{$\pm 0.46$}& 21.81 \tiny{$\pm 0.42$}                
                     & 33.93 \tiny{$\pm 2.34$}& 31.75 \tiny{$\pm 2.50$}
                     \\
                     
                     & GraphSMOTE 
                     & 40.72 \tiny{$\pm 0.83$}& 33.44 \tiny{$\pm 1.28$}
                     & 27.10 \tiny{$\pm 0.49$}& 26.63 \tiny{$\pm 0.63$}
                     & 40.77 \tiny{$\pm 2.24$}& \underline{38.96} \tiny{$\pm 2.48$} \\
                    % 2
                     & BalancedSoftmax
                     & 54.37 \tiny{$\pm 0.87$} & 53.91 \tiny{$\pm 0.89$}
                     & 33.73 \tiny{$\pm 1.11$} & 32.82 \tiny{$\pm 1.12$}
                     & \underline{41.05} \tiny{$\pm 2.87$} & 38.65 \tiny{$\pm 2.48$} \\
                    % 1
                     & GraphENS + TAM
                     & 49.13 \tiny{$\pm 0.73$} & 48.61 \tiny{$\pm 0.71$}
                     & 29.01 \tiny{$\pm 0.41$} & 26.99 \tiny{$\pm 0.67$}
                     & 39.34 \tiny{$\pm 3.36$} & 35.11 \tiny{$\pm 2.64$} \\
                    & Unreal
                     & \underline{58.79} \tiny{$\pm 0.54$} & \underline{57.91} \tiny{$\pm 0.44$}
                     & \underline{43.02} \tiny{$\pm 1.18$} & \underline{41.57} \tiny{$\pm 1.15$}
                     & 38.33 \tiny{$\pm 5.25$} & 31.61 \tiny{$\pm 2.92$}\\
                     \cline{2-8}
                     % 3
                     & UPL 
                     & \textbf{58.87} \tiny{$\pm 0.48$} & \textbf{58.72} \tiny{$\pm 0.44$}
                      & \textbf{43.62} \tiny{$\pm 0.47$} & \textbf{43.07} \tiny{$\pm 0.48$}
                     & \textbf{48.35} \tiny{$\pm 3.16$} & \textbf{43.34} \tiny{$\pm 2.49$} \\
\cline{2-8}
\noalign{\vskip\doublerulesep
         \vskip-\arrayrulewidth} \cline{2-8}
\rule{0pt}{2.5ex}  
\multirow{8}{*}{\rotatebox{90}{Sage}}
                    % 0
                    & Vanilla 
                     & 45.61 \tiny{$\pm 0.41$} & 40.52 \tiny{$\pm 0.71$}
                     & 27.55 \tiny{$\pm 0.41$} & 21.43 \tiny{$\pm 0.29$}
                     & 50.08 \tiny{$\pm 3.41$} & 49.39 \tiny{$\pm 4.14$} \\

                     & Re-Weight & 36.49 \tiny{$\pm 1.21$}& 34.84 \tiny{$\pm 1.30$}
                     & 29.83 \tiny{$\pm 0.59$}& 25.88 \tiny{$\pm 0.42$}
                     & 68.13 \tiny{$\pm 3.19$}& 63.45 \tiny{$\pm 2.27$}
                     \\
                    
                     & PC Softmax & 40.71 \tiny{$\pm 0.82$}& 39.95 \tiny{$\pm 0.98$}
                     & 29.23 \tiny{$\pm 0.50$}& 28.19 \tiny{$\pm 0.54$}
                     & \textbf{70.57} \tiny{$\pm 3.34$}& \textbf{67.13} \tiny{$\pm 2.91$}
                     \\
                     
                     & DR-GCN & 39.58 \tiny{$\pm 0.58$}& 38.37 \tiny{$\pm 0.72$}
                     & 28.78 \tiny{$\pm 0.50$}& 25.01 \tiny{$\pm 0.70$}
                     & 69.30 \tiny{$\pm 1.99$}& 64.60 \tiny{$\pm 2.00$}
                  
                     \\
                     & GraphSMOTE 
                     & 33.31 \tiny{$\pm 0.63$}& 30.83 \tiny{$\pm 0.67$}
                     & 25.51 \tiny{$\pm 0.43$}& 19.79 \tiny{$\pm 0.49$}
                     & 65.14 \tiny{$\pm 3.84$}& 62.53 \tiny{$\pm 3.40$} \\
                    % 2
                     & BalancedSoftmax
                     & \underline{54.01} \tiny{$\pm 0.54$} & \underline{53.72} \tiny{$\pm 0.52$}
                     & 32.70 \tiny{$\pm 0.50$} & \underline{32.37} \tiny{$\pm 0.51$}
                     & 63.51 \tiny{$\pm 3.67$} & 62.50 \tiny{$\pm 2.97$} \\
                    % 1
                     & GraphENS + TAM
                     & 48.01 \tiny{$\pm 0.97$} & 47.19 \tiny{$\pm 1.05$}
                     & 30.75 \tiny{$\pm 0.51$} & 30.54 \tiny{$\pm 0.53$}
                     & 59.69 \tiny{$\pm 3.04$} & 53.28 \tiny{$\pm 2.61$} \\
                    & Unreal
                     & 50.21 \tiny{$\pm 1.63$} & 49.35 \tiny{$\pm 1.84$}
                     & \textbf{33.43} \tiny{$\pm 0.53$} & 32.30 \tiny{$\pm 0.49$} 
                     & 62.20 \tiny{$\pm 6.83$} & 51.52 \tiny{$\pm 4.14$}\\
                     \cline{2-8}
                     % 3
                     & UPL 
                     & \textbf{54.68} \tiny{$\pm 0.63$} & \textbf{54.22} \tiny{$\pm 0.67$}
                     & \underline{33.19} \tiny{$\pm 0.75$} & \textbf{32.74} \tiny{$\pm 0.75$}
                     & \underline{70.19} \tiny{$\pm 4.50$} & \underline{67.09} \tiny{$\pm 3.78$} \\

\bottomrule

\end{tabular}
}

\end{scriptsize}
\end{center}
\label{tb:apendix_hetero}
\vspace{-0.1in}
\end{table}
\begin{table}[ht!]
\caption{ Experimental results of our algorithm (UPL) compared to baselines for $\rho=5$. We report the averaged balanced accuracy, averaged F1-score and the standard error of each experiment for Cora, CiteSeer and PubMed datasets. All the results have been calculated over 10 repetitions and 10 pseudo-labeling iterations.}
\label{tb:Homo_ir_5}
\begin{center}
\begin{scriptsize}
\setlength{\columnsep}{1pt}%
\resizebox{0.9\linewidth}{!}{
\begin{tabular}{@{\extracolsep{1pt}}rlcc|cc|cc@{}}
\toprule 
 & \multirow{1}{*}{\textbf{Datasets}} & \multicolumn{2}{c}{Cora} & \multicolumn{2}{c}{CiteSeer} & \multicolumn{2}{c}{PubMed}  \\ 
\cline{2-8} \rule{0pt}{2.2ex}
& \textbf{Imbalance Ratio ($\rho=5$)} & bAcc. & F1 & bAcc. & F1 & bAcc. & F1 \\
\cline{2-8}
\rule{0pt}{2.5ex}  
\multirow{4}{*}{\rotatebox{90}{GCN}} 
                    \rule{0pt}{2ex}
                    
            & Vanilla 
& 63.07 \tiny{$\pm 0.76$} & 62.12 \tiny{$\pm 0.98$}
& 39.11 \tiny{$\pm 1.21$} & 30.01 \tiny{$\pm 1.89$}
& 67.37 \tiny{$\pm 0.96$} & 59.63 \tiny{$\pm 1.76$}
\\
& BalancedSoftmax 
& 73.31 \tiny{$\pm 0.76$} & 72.77 \tiny{$\pm 0.68$}
& 58.82 \tiny{$\pm 1.27$} & 57.09 \tiny{$\pm 1.64$}
& \underline{74.29} \tiny{$\pm 0.97$} & \underline{73.60} \tiny{$\pm 0.91$}
\\
& GraphENS + TAM 
& \underline{75.25} \tiny{$\pm 0.82$} & \underline{75.00} \tiny{$\pm 0.61$}
& \underline{60.68} \tiny{$\pm 1.05$} & \underline{59.86} \tiny{$\pm 1.17$}
& 74.18 \tiny{$\pm 0.63$} & 73.21 \tiny{$\pm 0.68$}
\\
& UPL 
& \textbf{78.66} \tiny{$\pm 0.52$} & \textbf{77.62} \tiny{$\pm 0.55$}
& \textbf{68.29} \tiny{$\pm 0.13$} & \textbf{67.92} \tiny{$\pm 0.15$}
& \textbf{77.81} \tiny{$\pm 0.38$} & \textbf{77.49} \tiny{$\pm 0.4$}
\\

\cline{2-8}
\noalign{\vskip\doublerulesep
         \vskip-\arrayrulewidth} \cline{2-8}
\rule{0pt}{2.5ex}  
\multirow{4}{*}{\rotatebox{90}{GAT}}

            & Vanilla 
& 57.19 \tiny{$\pm 0.65$} & 56.18 \tiny{$\pm 0.83$}
& 36.17 \tiny{$\pm 0.59$} & 26.86 \tiny{$\pm 1.12$}
& 65.18 \tiny{$\pm 1.12$} & 56.75 \tiny{$\pm 2.43$}
\\
& BalancedSoftmax 
& 72.48 \tiny{$\pm 0.71$} & 72.05 \tiny{$\pm 0.57$}
& 57.74 \tiny{$\pm 0.90$} & 56.17 \tiny{$\pm 1.11$}
& 73.77 \tiny{$\pm 0.43$} & \underline{72.85} \tiny{$\pm 0.80$}
\\
& GraphENS + TAM 
& \underline{74.27} \tiny{$\pm 0.59$} & \underline{73.19} \tiny{$\pm 0.55$}
& \underline{58.98} \tiny{$\pm 1.11$} & \underline{58.05} \tiny{$\pm 1.41$}
& \underline{73.67} \tiny{$\pm 0.60$} & 72.08 \tiny{$\pm 0.88$}
\\
& UPL 
& \textbf{78.94} \tiny{$\pm 0.81$} & \textbf{77.8} \tiny{$\pm 0.73$}
& \textbf{66.92} \tiny{$\pm 0.26$} & \textbf{66.44} \tiny{$\pm 0.28$}
& \textbf{75.83} \tiny{$\pm 0.45$} & \textbf{75.63} \tiny{$\pm 0.48$}
\\
                    
\cline{2-8}
\noalign{\vskip\doublerulesep
         \vskip-\arrayrulewidth} \cline{2-8}
\rule{0pt}{2.5ex}  
\multirow{4}{*}{\rotatebox{90}{Sage}} 
                    
    & Vanilla 
& 59.30 \tiny{$\pm 0.92$} & 57.30 \tiny{$\pm 1.41$}
& 37.21 \tiny{$\pm 0.92$} & 26.80 \tiny{$\pm 1.56$}
& 65.88 \tiny{$\pm 0.85$} & 57.60 \tiny{$\pm 1.52$}
\\
& BalancedSoftmax 
& 70.41 \tiny{$\pm 0.60$} & 69.84 \tiny{$\pm 0.52$}
& 56.53 \tiny{$\pm 1.19$} & 54.88 \tiny{$\pm 1.48$}
& 72.89 \tiny{$\pm 0.39$} & 71.69 \tiny{$\pm 0.62$}
\\
& GraphENS + TAM 
& \underline{72.93} \tiny{$\pm 0.35$} & \underline{72.43} \tiny{$\pm 0.41$}
& \underline{59.08} \tiny{$\pm 1.19$} & \underline{58.39} \tiny{$\pm 1.27$}
& \underline{74.05} \tiny{$\pm 0.79$} & \underline{73.01} \tiny{$\pm 0.97$}
\\
& UPL 
& \textbf{77.54} \tiny{$\pm 0.56$} & \textbf{76.37} \tiny{$\pm 0.56$}
& \textbf{67.23} \tiny{$\pm 0.45$} & \textbf{66.81} \tiny{$\pm 0.41$}
& \textbf{75.96} \tiny{$\pm 0.58$} & \textbf{75.42} \tiny{$\pm 0.54$}
\\
\bottomrule

\end{tabular}
}

\end{scriptsize}
\end{center}
\vspace{-0.1in}
\end{table}
\begin{table}[ht!]
\caption{Experimental results of our algorithm (UPL) compared to baselines for $\rho=5$. We report the averaged balanced accuracy, averaged F1-score and the standard error of each experiment for Chameleon, Squirrel and Wisconsin datasets. All the results have been calculated over 10 repetitions and 10 pseudo-labeling iterations.}
\label{tb:Hetero_ir_5}
\begin{center}
\begin{scriptsize}
\setlength{\columnsep}{1pt}%
\resizebox{0.9\linewidth}{!}{
\begin{tabular}{@{\extracolsep{1pt}}rlcc|cc|cc@{}}
\toprule 
 & \multirow{1}{*}{\textbf{Datasets}} & \multicolumn{2}{c}{Chameleon} & \multicolumn{2}{c}{Squirrel} & \multicolumn{2}{c}{Wisconsin}  \\ 
\cline{2-8} \rule{0pt}{2.2ex}
& \textbf{Imbalance Ratio ($\rho=5$)} & bAcc. & F1 & bAcc. & F1 & bAcc. & F1 \\
\cline{2-8}
\rule{0pt}{2.5ex}  
\multirow{4}{*}{\rotatebox{90}{GCN}} 
                    \rule{0pt}{2ex}
                    
& Vanilla 
& 62.03 \tiny{$\pm 0.62$} & 61.62 \tiny{$\pm 0.65$}
& 41.48 \tiny{$\pm 0.85$} & 40.38 \tiny{$\pm 0.85$}
& 41.03 \tiny{$\pm 2.00$} & \underline{38.97} \tiny{$\pm 1.91$}
\\
& BalancedSoftmax 
& \underline{63.23} \tiny{$\pm 0.64$} & \underline{62.94} \tiny{$\pm 0.64$}
& \underline{45.36} \tiny{$\pm 0.71$} & \underline{45.04} \tiny{$\pm 0.74$}
& \underline{43.20} \tiny{$\pm 2.90$} & 38.81 \tiny{$\pm 2.19$}
\\
& GraphENS + TAM 
& 43.42 \tiny{$\pm 1.05$} & 41.57 \tiny{$\pm 0.95$}
& 27.78 \tiny{$\pm 0.46$} & 23.29 \tiny{$\pm 0.74$}
& 40.85 \tiny{$\pm 2.93$} & 36.32 \tiny{$\pm 1.78$}
\\
& UPL 
& \textbf{63.7} \tiny{$\pm 0.74$} & \textbf{63.48} \tiny{$\pm 0.71$}
& \textbf{45.49} \tiny{$\pm 0.48$} & \textbf{45.16} \tiny{$\pm 0.48$}
& \textbf{46.51} \tiny{$\pm 2.63$} & \textbf{41.54} \tiny{$\pm 2.27$}
\\

\cline{2-8}
\noalign{\vskip\doublerulesep
         \vskip-\arrayrulewidth} \cline{2-8}
\rule{0pt}{2.5ex}  
\multirow{4}{*}{\rotatebox{90}{GAT}}

& Vanilla 
& 56.11 \tiny{$\pm 0.91$} & 55.45 \tiny{$\pm 0.87$}
& 29.93 \tiny{$\pm 0.49$} & 22.58 \tiny{$\pm 0.42$}
& 37.69 \tiny{$\pm 2.71$} & 34.61 \tiny{$\pm 2.26$}
\\
& BalancedSoftmax 
& \underline{57.55} \tiny{$\pm 0.96$} & \underline{57.50} \tiny{$\pm 0.85$}
& \underline{39.08} \tiny{$\pm 1.38$} & \underline{38.58} \tiny{$\pm 1.43$}
& 39.82 \tiny{$\pm 2.30$} & 37.01 \tiny{$\pm 3.14$}
\\
& GraphENS + TAM 
& 50.02 \tiny{$\pm 0.50$} & 49.68 \tiny{$\pm 0.57$}
& 29.27 \tiny{$\pm 0.40$} & 26.43 \tiny{$\pm 0.88$}
& \underline{45.10} \tiny{$\pm 2.30$} & \underline{37.04} \tiny{$\pm 1.47$}
\\
& UPL 
& \textbf{61.78} \tiny{$\pm 0.8$} & \textbf{61.48} \tiny{$\pm 0.8$}
& \textbf{46.35} \tiny{$\pm 0.58$} & \textbf{46.07} \tiny{$\pm 0.57$}
& \textbf{44.92} \tiny{$\pm 3.18$} & \textbf{41.95} \tiny{$\pm 2.84$}
\\
                    
\cline{2-8}
\noalign{\vskip\doublerulesep
         \vskip-\arrayrulewidth} \cline{2-8}
\rule{0pt}{2.5ex}  
\multirow{4}{*}{\rotatebox{90}{Sage}} 
                    
& Vanilla 
& 49.86 \tiny{$\pm 0.40$} & 48.12 \tiny{$\pm 0.61$}
& 29.41 \tiny{$\pm 0.51$} & 25.90 \tiny{$\pm 0.58$}
& 61.13 \tiny{$\pm 2.03$} & 62.47 \tiny{$\pm 2.21$}
\\
& BalancedSoftmax 
& \underline{56.57} \tiny{$\pm 0.49$} & \underline{56.10} \tiny{$\pm 0.53$}
& \underline{34.27} \tiny{$\pm 0.58$} & \underline{34.04} \tiny{$\pm 0.57$}
& \underline{68.17} \tiny{$\pm 3.38$} & \underline{66.74} \tiny{$\pm 3.50$}
\\
& GraphENS + TAM 
& 49.14 \tiny{$\pm 0.69$} & 48.42 \tiny{$\pm 0.74$}
& 31.90 \tiny{$\pm 0.80$} & 31.39 \tiny{$\pm 0.83$}
& 56.33 \tiny{$\pm 4.20$} & 50.60 \tiny{$\pm 3.27$}
\\
& UPL 
& \textbf{57.42} \tiny{$\pm 0.60$} & \textbf{56.94} \tiny{$\pm 0.60$}
& \textbf{34.49} \tiny{$\pm 0.49$} & \textbf{34.33} \tiny{$\pm 0.51$}
& \textbf{73.61} \tiny{$\pm 2.99$} & \textbf{71.3} \tiny{$\pm 2.89$}
\\
\bottomrule

\end{tabular}
}

\end{scriptsize}
\end{center}
\vspace{-0.1in}
\end{table}
\begin{table}[ht!]
\caption{Experimental results of our algorithm (UPL) compared to baselines for $\rho=2$. We report the averaged balanced accuracy, averaged F1-score and the standard error of each experiment for Cora, CiteSeer and PubMed datasets. All the results have been calculated over 10 repetitions and 10 pseudo-labeling iterations.}
\label{tb:Homo_ir_2}
\begin{center}
\begin{scriptsize}
\setlength{\columnsep}{1pt}%
\resizebox{0.9\linewidth}{!}{
\begin{tabular}{@{\extracolsep{1pt}}rlcc|cc|cc@{}}
\toprule 
 & \multirow{1}{*}{\textbf{Datasets}} & \multicolumn{2}{c}{Cora} & \multicolumn{2}{c}{CiteSeer} & \multicolumn{2}{c}{PubMed}  \\ 
\cline{2-8} \rule{0pt}{2.2ex}
& \textbf{Imbalance Ratio ($\rho=2$)} & bAcc. & F1 & bAcc. & F1 & bAcc. & F1 \\
\cline{2-8}
\rule{0pt}{2.5ex}  
\multirow{4}{*}{\rotatebox{90}{GCN}} 
                    \rule{0pt}{2ex}
                    
& Vanilla 
& 77.09 \tiny{$\pm 0.43$} & 77.18 \tiny{$\pm 0.39$}
& 63.29 \tiny{$\pm 0.93$} & 62.74 \tiny{$\pm 0.98$}
& 76.93 \tiny{$\pm 0.33$} & 75.70 \tiny{$\pm 0.40$}
\\
& BalancedSoftmax 
& 79.46 \tiny{$\pm 0.28$} & 78.58 \tiny{$\pm 0.30$}
& 67.26 \tiny{$\pm 0.49$} & 66.84 \tiny{$\pm 0.53$}
& \underline{78.25} \tiny{$\pm 0.28$} & \underline{77.78} \tiny{$\pm 0.29$}
\\
& GraphENS + TAM 
& \underline{79.88} \tiny{$\pm 0.32$} & \underline{79.15} \tiny{$\pm 0.35$}
& \underline{67.27} \tiny{$\pm 0.34$} & \underline{66.87} \tiny{$\pm 0.36$}
& 76.42 \tiny{$\pm 0.48$} & 76.15 \tiny{$\pm 0.51$}
\\
& UPL 
& \textbf{81.24} \tiny{$\pm 0.31$} & \textbf{80.27} \tiny{$\pm 0.30$}
& \textbf{68.58} \tiny{$\pm 0.29$} & \textbf{68.16} \tiny{$\pm 0.31$}
& \textbf{78.37} \tiny{$\pm 0.21$} & \textbf{77.88} \tiny{$\pm 0.21$}
\\

\cline{2-8}
\noalign{\vskip\doublerulesep
         \vskip-\arrayrulewidth} \cline{2-8}
\rule{0pt}{2.5ex}  
\multirow{4}{*}{\rotatebox{90}{GAT}}

& Vanilla 
& 73.61 \tiny{$\pm 0.49$} & 74.45 \tiny{$\pm 0.49$}
& 60.76 \tiny{$\pm 1.19$} & 59.84 \tiny{$\pm 1.49$}
& 74.43 \tiny{$\pm 0.55$} & 72.14 \tiny{$\pm 0.81$}
\\
& BalancedSoftmax 
& 78.68 \tiny{$\pm 0.28$} & 77.64 \tiny{$\pm 0.35$}
& 65.71 \tiny{$\pm 0.39$} & 65.14 \tiny{$\pm 0.40$}
& \underline{76.54} \tiny{$\pm 0.37$} & 75.49 \tiny{$\pm 0.49$}
\\
& GraphENS + TAM 
& \underline{79.25} \tiny{$\pm 0.27$} & \underline{78.13} \tiny{$\pm 0.28$}
& \underline{66.62} \tiny{$\pm 0.40$} & \underline{66.24} \tiny{$\pm 0.40$}
& 76.28 \tiny{$\pm 0.47$} & \underline{75.87} \tiny{$\pm 0.46$}
\\
& UPL 
& \textbf{80.53} \tiny{$\pm 0.32$} & \textbf{79.76} \tiny{$\pm 0.19$}
& \textbf{67.69} \tiny{$\pm 0.23$} & \textbf{67.31} \tiny{$\pm 0.21$}
& \textbf{77.83} \tiny{$\pm 0.22$} & \textbf{77.04} \tiny{$\pm 0.29$}
\\
                    
\cline{2-8}
\noalign{\vskip\doublerulesep
         \vskip-\arrayrulewidth} \cline{2-8}
\rule{0pt}{2.5ex}  
\multirow{4}{*}{\rotatebox{90}{Sage}} 
                    
& Vanilla 
& 74.44 \tiny{$\pm 0.25$} & 74.63 \tiny{$\pm 0.24$}
& 62.32 \tiny{$\pm 0.70$} & 61.63 \tiny{$\pm 0.80$}
& 75.65 \tiny{$\pm 0.68$} & 74.41 \tiny{$\pm 0.76$}
\\
& BalancedSoftmax 
& 77.56 \tiny{$\pm 0.27$} & 76.61 \tiny{$\pm 0.25$}
& 65.74 \tiny{$\pm 0.48$} & 65.24 \tiny{$\pm 0.49$}
& 75.68 \tiny{$\pm 0.40$} & 74.97 \tiny{$\pm 0.38$}
\\
& GraphENS + TAM 
& \underline{78.79} \tiny{$\pm 0.41$} & \underline{77.07} \tiny{$\pm 0.42$}
& \underline{66.83} \tiny{$\pm 0.27$} & \underline{66.32} \tiny{$\pm 0.30$}
& \underline{76.28} \tiny{$\pm 0.46$} & \underline{75.71} \tiny{$\pm 0.47$}
\\
& UPL 
& \textbf{80.12} \tiny{$\pm 0.26$} & \textbf{78.67} \tiny{$\pm 0.31$}
& \textbf{68.98} \tiny{$\pm 0.20$} & \textbf{68.62} \tiny{$\pm 0.19$}
& \textbf{76.90} \tiny{$\pm 0.30$} & \textbf{76.45} \tiny{$\pm 0.26$}
\\
\bottomrule

\end{tabular}
}

\end{scriptsize}
\end{center}
\vspace{-0.1in}
\end{table}
\begin{table}[ht!]
\caption{Experimental results of our algorithm (UPL) compared to baselines for $\rho=2$. We report the averaged balanced accuracy, averaged F1-score and the standard error of each experiment for Chameleon, Squirrel and Wisconsin datasets. Same as other experiments, all the results have been calculated over 10 repetitions and 10 pseudo-labeling iterations.}
\label{tb:Hetero_ir_2}
\begin{center}
\begin{scriptsize}
\setlength{\columnsep}{1pt}%
\resizebox{0.9\linewidth}{!}{
\begin{tabular}{@{\extracolsep{1pt}}rlcc|cc|cc@{}}
\toprule 
 & \multirow{1}{*}{\textbf{Datasets}} & \multicolumn{2}{c}{Chameleon} & \multicolumn{2}{c}{Squirrel} & \multicolumn{2}{c}{Wisconsin}  \\ 
\cline{2-8} \rule{0pt}{2.2ex}
& \textbf{Imbalance Ratio ($\rho=2$)} & bAcc. & F1 & bAcc. & F1 & bAcc. & F1 \\
\cline{2-8}
\rule{0pt}{2.5ex}  
\multirow{4}{*}{\rotatebox{90}{GCN}}

& Vanilla 
& 66.95 \tiny{$\pm 0.73$} & 66.67 \tiny{$\pm 0.69$}
& 49.53 \tiny{$\pm 0.38$} & 49.59 \tiny{$\pm 0.38$}
& 39.63 \tiny{$\pm 2.14$} & 38.28 \tiny{$\pm 2.26$}
\\
& BalancedSoftmax 
& \underline{67.10} \tiny{$\pm 0.79$} & \underline{66.78} \tiny{$\pm 0.73$}
& \underline{50.00} \tiny{$\pm 0.47$} & \underline{49.74} \tiny{$\pm 0.48$}
& \textbf{44.81} \tiny{$\pm 3.18$} & \textbf{41.32} \tiny{$\pm 2.32$}
\\
& Graphens + TAM 
& 43.76 \tiny{$\pm 1.08$} & 41.49 \tiny{$\pm 1.38$}
& 26.98 \tiny{$\pm 0.79$} & 23.69 \tiny{$\pm 0.67$}
& 43.86 \tiny{$\pm 2.89$} & 37.68 \tiny{$\pm 2.15$}
\\
& UPL 
& \textbf{67.58} \tiny{$\pm 0.56$} & \textbf{67.37} \tiny{$\pm 0.54$}
& \textbf{51.26} \tiny{$\pm 0.62$} & \textbf{51.01} \tiny{$\pm 0.56$}
& \underline{44.32} \tiny{$\pm 3.27$} & \underline{40.30} \tiny{$\pm 2.41$}
\\
\cline{2-8}
\noalign{\vskip\doublerulesep
         \vskip-\arrayrulewidth} \cline{2-8}
\rule{0pt}{2.5ex}  
\multirow{4}{*}{\rotatebox{90}{GAT}}
& Vanilla 
& \underline{63.44} \tiny{$\pm 0.50$} & \underline{63.06} \tiny{$\pm 0.45$}
& 39.20 \tiny{$\pm 1.24$} & 38.06 \tiny{$\pm 1.55$}
& 34.40 \tiny{$\pm 2.06$} & 32.28 \tiny{$\pm 1.72$}
\\
& BalancedSoftmax 
& 62.81 \tiny{$\pm 0.93$} & 62.39 \tiny{$\pm 0.89$}
& \underline{43.99} \tiny{$\pm 1.95$} & \underline{43.55} \tiny{$\pm 1.98$}
& 42.17 \tiny{$\pm 2.41$} & \underline{37.66} \tiny{$\pm 2.74$}
\\
& Graphens + TAM 
& 53.21 \tiny{$\pm 0.84$} & 52.86 \tiny{$\pm 0.93$}
& 33.21 \tiny{$\pm 0.58$} & 32.81 \tiny{$\pm 0.65$}
& \underline{44.02} \tiny{$\pm 2.16$} & 36.57 \tiny{$\pm 1.60$}
\\
& UPL 
& \textbf{65.68} \tiny{$\pm 0.57$} & \textbf{65.39} \tiny{$\pm 0.55$}
& \textbf{50.13} \tiny{$\pm 0.39$} & \textbf{49.68} \tiny{$\pm 0.35$}
& \textbf{44.40} \tiny{$\pm 2.51$} & \textbf{41.34} \tiny{$\pm 2.44$}
\\

\cline{2-8}
\noalign{\vskip\doublerulesep
         \vskip-\arrayrulewidth} \cline{2-8}
\rule{0pt}{2.5ex}  
\multirow{4}{*}{\rotatebox{90}{Sage}}
& Vanilla 
& \underline{59.18} \tiny{$\pm 0.42$} & \underline{58.85} \tiny{$\pm 0.41$}
& 36.10 \tiny{$\pm 0.63$} & 35.98 \tiny{$\pm 0.63$}
& 65.57 \tiny{$\pm 2.81$} & 66.42 \tiny{$\pm 2.48$}
\\
& BalancedSoftmax 
& 59.17 \tiny{$\pm 0.52$} & 58.55 \tiny{$\pm 0.50$}
& \underline{37.22} \tiny{$\pm 0.53$} & \textbf{37.00} \tiny{$\pm 0.52$}
& \underline{74.10} \tiny{$\pm 2.77$} & \underline{70.37} \tiny{$\pm 3.04$}
\\
& Graphens + TAM 
& 51.25 \tiny{$\pm 0.90$} & 50.87 \tiny{$\pm 0.93$}
& 32.86 \tiny{$\pm 0.49$} & 32.52 \tiny{$\pm 0.51$}
& 63.35 \tiny{$\pm 3.53$} & 53.89 \tiny{$\pm 3.07$}
\\
& UPL 
& \textbf{60.01} \tiny{$\pm 0.56$} & \textbf{59.59} \tiny{$\pm 0.57$}
& \textbf{37.30} \tiny{$\pm 0.58$} & \underline{36.94} \tiny{$\pm 0.58$}
& \textbf{75.21} \tiny{$\pm 3.21$} & \textbf{73.20} \tiny{$\pm 2.86$}
\\

\bottomrule

\end{tabular}
}

\end{scriptsize}
\end{center}
\vspace{-0.1in}
\end{table}
\clearpage

\subsection{Additional Datasets}
To further test UPL, we have included additional experiments with Amazon datasets \citep{shchur2018pitfalls}, comparing them to other previous works in Table~\ref{tb:Amazon}, and ablation studies in Table \ref{tb:Amazon_ablation}.

\begin{table}[ht!]
\caption{Experimental results of our algorithm (UPL) compared to baselines for $\rho=10$. We report the averaged balanced accuracy, averaged F1-score and the standard error of each experiment for Computers and Photo datasets. All the results have been calculated over 10 repetitions and 10 pseudo-labeling iterations.}
\begin{center}
\begin{scriptsize}
\setlength{\columnsep}{1pt}%
\resizebox{0.9\linewidth}{!}{
\begin{tabular}{@{\extracolsep{1pt}}rlcc|cc@{}}
\toprule 
 & \multirow{1}{*}{Datasets} & \multicolumn{2}{c}{Computers} & \multicolumn{2}{c}{Photo}  \\ 
\cline{2-6} \rule{0pt}{2.2ex}
& \textbf{Imbalance Ratio ($\rho=10$)} & bAcc. & F1 & bAcc. & F1 \\
\cline{2-6}
\rule{0pt}{2.5ex}  
\multirow{4}{*}{\rotatebox{90}{GCN}} 
                    \rule{0pt}{2ex}
& Vanilla 
& 47.11 \tiny{$\pm 2.28$} & 36.53 \tiny{$\pm 3.27$}
& 53.80 \tiny{$\pm 3.09$} & 44.83 \tiny{$\pm 4.50$}
\\
& BalancedSoftmax 
& \underline{74.35} \tiny{$\pm 1.30$} & \underline{73.73} \tiny{$\pm 1.35$}
& 74.86 \tiny{$\pm 1.20$} & 74.33 \tiny{$\pm 1.39$}
\\
& GraphENS + TAM 
& 73.82 \tiny{$\pm 2.76$} & 72.68 \tiny{$\pm 3.02$}
& \textbf{82.34} \tiny{$\pm 1.76$} & \textbf{81.39} \tiny{$\pm 2.26$}
\\
& UPL 
& \textbf{75.53} \tiny{$\pm 0.94$} & \textbf{75.84} \tiny{$\pm 0.97$}
& \underline{79.27} \tiny{$\pm 0.93$} & \underline{78.67} \tiny{$\pm 0.93$}
\\
\noalign{\vskip\doublerulesep
         \vskip-\arrayrulewidth} \cline{2-6}
\rule{0pt}{2.5ex}  
\multirow{4}{*}{\rotatebox{90}{GAT}}
& Vanilla 
& 41.24 \tiny{$\pm 1.24$} & 32.45 \tiny{$\pm 1.92$}
& 46.52 \tiny{$\pm 0.67$} & 37.62 \tiny{$\pm 1.55$}
\\
& BalancedSoftmax 
& \underline{64.11} \tiny{$\pm 2.09$} & \underline{63.01} \tiny{$\pm 2.70$}
& \underline{71.36} \tiny{$\pm 1.90$} & \underline{69.78} \tiny{$\pm 2.07$}
\\
& GraphENS + TAM 
& 64.55 \tiny{$\pm 0.95$} & 62.59 \tiny{$\pm 1.34$}
& 68.86 \tiny{$\pm 1.34$} & 65.52 \tiny{$\pm 1.60$}
\\
& UPL 
& \textbf{74.62} \tiny{$\pm 1.08$} & \textbf{74.20} \tiny{$\pm 1.10$}
& \textbf{81.93} \tiny{$\pm 0.87$} & \textbf{81.60} \tiny{$\pm 0.91$}
\\
\noalign{\vskip\doublerulesep
         \vskip-\arrayrulewidth} \cline{2-6}
\rule{0pt}{2.5ex}  
\multirow{4}{*}{\rotatebox{90}{Sage}} 
& Vanilla 
& 48.16 \tiny{$\pm 1.84$} & 38.56 \tiny{$\pm 2.70$}
& 60.43 \tiny{$\pm 2.83$} & 52.07 \tiny{$\pm 3.69$}
\\
& BalancedSoftmax 
& 73.18 \tiny{$\pm 1.51$} & 72.19 \tiny{$\pm 1.71$}
& 78.55 \tiny{$\pm 1.41$} & 77.46 \tiny{$\pm 1.59$}
\\
& GraphENS + TAM 
& \underline{75.20} \tiny{$\pm 1.47$} & \underline{73.96} \tiny{$\pm 1.72$}
& \underline{81.36} \tiny{$\pm 1.54$} & \underline{80.67} \tiny{$\pm 1.64$}
\\
& UPL 
& \textbf{81.56} \tiny{$\pm 0.94$} & \textbf{81.49} \tiny{$\pm 1.01$}
& \textbf{89.00} \tiny{$\pm 0.44$} & \textbf{88.90} \tiny{$\pm 0.46$}
\\

\bottomrule

\end{tabular}
}

\end{scriptsize}
\end{center}
\label{tb:Amazon}
\vspace{-0.1in}
\end{table}
\begin{table}[ht!]
\caption{We conducted an ablation study on our UPL algorithm, comparing it to baselines by systematically removing key components. We report the averaged balanced accuracy, averaged F1-score, and the standard error for each experiment on the Computers and Photo datasets. The results were calculated over 10 repetitions and 10 pseudo-labeling iterations.}
\begin{center}
\begin{scriptsize}
\setlength{\columnsep}{1pt}%
\resizebox{0.9\linewidth}{!}{
\begin{tabular}{@{\extracolsep{1pt}}rlcc|cc@{}}
\toprule 
 & \multirow{1}{*}{Datasets} & \multicolumn{2}{c}{Computers} & \multicolumn{2}{c}{Photo}  \\ 
\cline{2-6} \rule{0pt}{2.2ex}
& \textbf{Imbalance Ratio ($\rho=10$)} & bAcc. & F1 & bAcc. & F1 \\
\cline{2-6}
\rule{0pt}{2.5ex}  
\multirow{8}{*}{\rotatebox{90}{GCN}} 
                    \rule{0pt}{2ex}
      % 0
                     & Vanilla 
& 46.15 \tiny{$\pm 1.61$} & 35.25 \tiny{$\pm 2.37$}
& 48.30 \tiny{$\pm 2.46$} & 36.86 \tiny{$\pm 3.46$}
\\                   % 1
                     & + BS
& 74.24 \tiny{$\pm 0.92$} & \underline{74.57} \tiny{$\pm 0.92$}
& \underline{76.86} \tiny{$\pm 0.87$} & \underline{75.97} \tiny{$\pm 0.92$}
\\

                     % 2
                     & + PL
& 61.95 \tiny{$\pm 1.55$} & 57.85 \tiny{$\pm 1.92$}
& 63.95 \tiny{$\pm 2.97$} & 58.74 \tiny{$\pm 4.17$}
\\
 
                    % 3
                     & + Uncertainty
& 47.96 \tiny{$\pm 2.25$} & 37.78 \tiny{$\pm 3.19$}
& 48.25 \tiny{$\pm 2.42$} & 36.79 \tiny{$\pm 3.40$}
\\

                     % 4
                     & + BS + PL
& 73.56 \tiny{$\pm 1.17$} & 73.74 \tiny{$\pm 1.14$}
& 76.86 \tiny{$\pm 0.99$} & 75.92 \tiny{$\pm 1.08$}
\\

                     % 5
                     & + Uncertainty + BS 
& \underline{74.44} \tiny{$\pm 1.02$} & 74.54 \tiny{$\pm 1.07$}
& 76.61 \tiny{$\pm 0.92$} & 75.91 \tiny{$\pm 0.96$}
\\

                     % 6
                     & + Uncertainty + PL
& 62.20 \tiny{$\pm 3.10$} & 55.43 \tiny{$\pm 3.83$}
& 72.14 \tiny{$\pm 2.93$} & 68.49 \tiny{$\pm 3.95$}
\\
                     
                    % \cdashline{2-14}
                     % 7
                     & UPL
& \textbf{75.53} \tiny{$\pm 0.94$} & \textbf{75.84} \tiny{$\pm 0.97$}
& \textbf{79.27} \tiny{$\pm 0.93$} & \textbf{78.67} \tiny{$\pm 0.93$}
\\

\bottomrule

\end{tabular}
}

\end{scriptsize}
\end{center}
\label{tb:Amazon_ablation}
\vspace{-0.1in}
\end{table}
% \subsubsection{Integration}

\subsection{Discussion}

Both Table \ref{tb:apendix_homo} and Table \ref{tb:apendix_hetero} demonstrate a clear superiority of our method across various architectures and datasets. A particularly notable aspect of our experiments is the significant margin by which our results surpass the second-best outcomes. This pronounced difference underscores the effectiveness of our approach, setting a new benchmark for performance in this field.

Our experimental analysis investigated the impact of varying the number of edge removal iterations ($t$) and the number of edges removed in each iteration ($S_k$) on model performance. The results, provided in Figure \ref{fig:augmentation_sweep}, demonstrate that choosing both $t=100$ and $S_k=100$ yields the best results in both studied datasets. Notably, the computation of uncertainty values does not require any training steps, which significantly reduces the computational overhead. Given that typical model training procedures require over 1000 epochs, the time needed to calculate uncertainty values is negligible in comparison to the overall training process.
% \todoa{Do we have experiment for this part?}\todot{Done}
\begin{figure}[ h!]
    \centering
    \includegraphics[width=0.6\columnwidth]{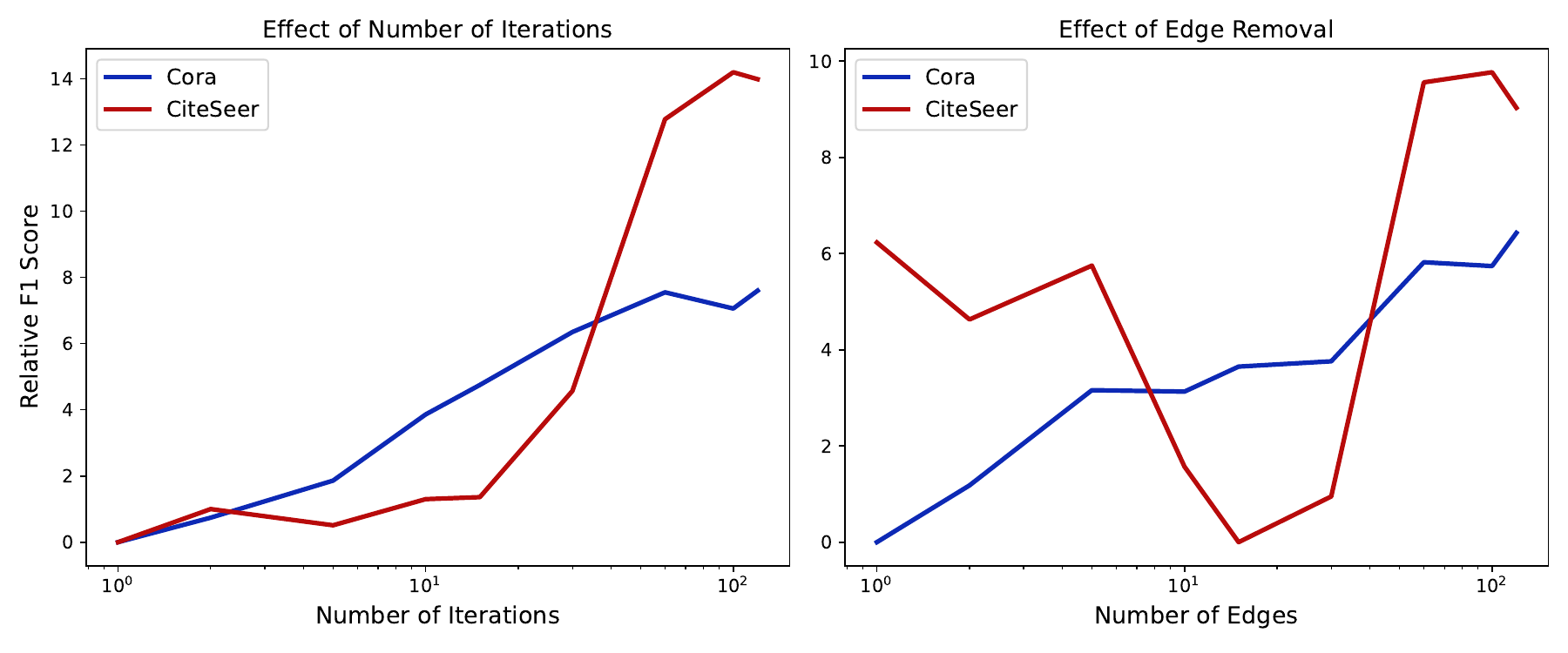}
    \caption{Selection Edge Removal: F1 score versus number of iterations and number of edges for removal. Each plot is normalized to it's lowest value.}
    \label{fig:augmentation_sweep}
\end{figure}
% \todoa{Add some discussion of tables here.}\todot{Done}
\begin{table}[ht!]
\caption{UPL performance summary: Comparison of UPL in terms of the number of \textcolor{red}{Best}\textbar Second rank performance.}
\begin{center}
\begin{scriptsize}
\setlength{\columnsep}{1pt}%
\resizebox{0.4\linewidth}{!}{
\begin{tabular}{@{\extracolsep{1pt}}lc|c|c@{}}
\toprule 
Datasets Types & GCN & GAT & SAGE  \\ 
\cline{1-4}
Homophilic & \textcolor{red}{$9$}$|2$ & \textcolor{red}{$10$}$|1$ & \textcolor{red}{$11$}$|0$ 
\\
% \cline{2-4}
Heterophilic&\textcolor{red}{$7$}$|2$ & \textcolor{red}{$9$}$|0$ & \textcolor{red}{$7$}$|2$
\\
\cdashline{1-4}
Total & \textcolor{red}{$16$}$|4$ & \textcolor{red}{$19$}$|1$ & \textcolor{red}{$18$}$|2$ \\
\bottomrule

\end{tabular}
}

\end{scriptsize}
\end{center}
\label{tb:performance_comparison}
\vspace{-0.1in}
\end{table}

For a summary of the performance of UPL, We have provided Table \ref{tb:performance_comparison}, Comparing our model with previous methods in term of F1 score ranking based on topological nature of the dataset and the employed backbone. Performance ranking is measured in terms of \textcolor{red}{Best}\textbar Second, It is worth noting that this ranking is adequate as UPL hasn't performed lower than second compared to previous methods in any of the experiments
\section{Detailed Comparison with UNREAL}\label{app:comp_unreal}

To assess practical efficiency, we conducted time measurements for both UNREAL and UPL under identical experimental conditions. Our findings demonstrate that UPL’s streamlined approach, which employs fewer and less complex iterative processes, substantially reduces both time and computational resource requirements, resulting in superior empirical efficiency. This advantage is clearly illustrated in our experiments with the PubMed dataset: UPL completed 10 runs in an average runtime of 0.66 minutes, while UNREAL required an average runtime of 14.1 minutes for the same number of runs. These timing tests were performed on a server equipped with an Nvidia RTX 3090 GPU and AMD Ryzen 5 3600x CPU @ 3.80GHz, ensuring consistent and comparable results. While UPL uses less time to run, it also delivers better performance, as shown in Tables \ref{tb:main_chart_homo}, \ref{tb:main_chart_hetero}, \ref{tb:apendix_homo}, and \ref{tb:apendix_hetero}. Our extensive evaluation across 24 different experimental setups reveals that UPL consistently outperforms UNREAL. In 17 out of these 24 setups, UPL achieved better results in both balanced accuracy and F1 score, while in the remaining cases, it secured second place, highlighting its robustness and consistent performance across a variety of conditions.

\end{document}